\newcolumntype{A}{>{\centering\arraybackslash}m{0.12\columnwidth}}
\newcolumntype{B}{>{\centering\arraybackslash}m{0.4\columnwidth}}
\newcommand{\dkl}{D_{\rm KL}}
\def\bE{\mathbb{E}}
\title{Bidirectional Generative Modeling Using Adversarial Gradient Estimation}
\author{Xinwei Shen, Tong Zhang, and Kani Chen}
\affil{The Hong Kong University of Science and Technology\\
\texttt{\{xshenal,tongzhang,makchen\}@ust.hk}}
\date{}
\DeclareMathOperator*{\argmin}{argmin}
\theoremstyle{plain}
\newtheorem{thm}{Theorem}
\newtheorem{lemma}{Lemma}
\theoremstyle{definition}
\begin{document}
\maketitle

\begin{abstract}
\noindent
This paper considers the general $f$-divergence formulation of bidirectional generative modeling, which includes VAE and BiGAN as special cases. We present a new optimization method for this formulation, where the gradient is computed using an adversarially learned discriminator. In our framework, we show that different divergences induce similar algorithms in terms of gradient evaluation, except with different scaling. Therefore this paper gives a general recipe for a class of principled $f$-divergence based generative modeling methods. Theoretical justifications and extensive empirical studies are provided to demonstrate the advantage of our approach over existing methods. 
\end{abstract}

\section{Introduction}

Deep generative modeling has aroused a lot of interest as a method for data generation and representation learning.
Consider the observed real data $X$ from an unknown distribution $p_r$ on $\mathcal{X}\subseteq\mathbb{R}^d$ and the latent variable $Z$ with a known prior $p_z$ on $\mathcal{Z}\subseteq\mathbb{R}^k$. In unidirectional data generation, we are interested in learning a transformation $G:\mathcal{Z}\times\mathcal{E}\to\mathcal{X}$ so that the distribution of the transformed variable $G(Z,\epsilon)$ becomes close to $p_r$, where $\epsilon\in\mathcal{E}$ is the source of randomness with a specified distribution $p_\epsilon$ and $G$ is referred to as a \textit{generator}. In many applications, bidirectional generative modeling is favored due to the ability to learn representations, where we additionally learn a transformation $E:\mathcal{X}\times\mathcal{E}\to\mathcal{Z}$, known as an \textit{encoder}. 

The principled formulation of bidirectional generation is to match the distributions of two data-latent pairs $(X,E(X,\epsilon))$ and $(G(Z,\epsilon),Z)$.
Classical methods including Variational Autoencoder (VAE) \cite{kingma2013auto} and Bidirectional Generative Adversarial Network (BiGAN) \cite{Donahue2017AdversarialFL,Dumoulin2017AdversariallyLI} turn out to handle this task using one specific distance measure as the objective. In this paper, we generally consider the $f$-divergence which is a natural and broad class of distance measures. 

For optimization, both VAE and BiGAN are limited to specific divergences and assumptions for the encoder and generator distributions, and hence do not apply in the general formulation. $f$-GAN \cite{nowozin2016f} extends GAN \cite{goodfellow2014generative} to $f$-divergence and can be applied in the formulation here. Like GAN, $f$-GAN introduces a discriminator to distinguish between two data-latent pairs. However, we find that limited by the minimax formulation, the discriminator loss of $f$-GAN tends to behave poorly in both statistical efficiency and training stability. 
Other methods \cite{poole2016improved,mohamed2016learning,mescheder2017adversarial,srivastava2017veegan,chen2017symmetric} propose to estimate the objective of $f$-divergence based on density ratio estimation and adopt adversarial training. However the consequent algorithms are heuristic without guarantee for convergence and cannot be reasonably applied to bidirectional cases, which is further  discussed in Section \ref{sec:relate_div_min}.

This paper proposes a new optimization method for this formulation.
We present a theorem to evaluate the gradient of the divergence with respect to the generator and encoder parameters, which generally applies to various $f$-divergences with the only difference being the scaling. Then we propose an efficient gradient estimator using a discriminator learned with nonlinear Logistic regression. Based on the theory and estimation, we obtain a family of algorithms, and hence gives a general recipe for a class of principled $f$-divergence based generative modeling methods. We further propose an applicable technique and obtain an algorithm which locally minimizes several divergences simultaneously with a lower variance and stable gradients. 


We highlight our main contributions as follows:
\begin{itemize}\vspace{-0.05in}
\setlength{\itemsep}{1pt}
\setlength{\parskip}{0pt}
\item We derive the formula to evaluate the gradient of a general $f$-divergence wrt. model parameters, which enables a principled family of algorithms for $f$-divergence based bidirectional generative modeling. 
\item We give theoretical insights on the $f$-divergence formulation in unidirectional generation, mode coverage and cycle consistency, and present a unified view of VAEs and GANs.
\item We conduct extensive empirical results on synthetic and real datasets to demonstrate: (i) the effectiveness of our optimization method in $f$-divergence minimization, and (ii) the advantages of our learned bidirectional generative models in mode coverage, realistic generation and high-level semantic representation.

\end{itemize}\vspace{-0.3cm}

\medskip
\noindent
\textbf{Notation} \ \ 
Throughout the paper, all distributions are assumed to be absolutely continuous with respect to Lebesgue measure unless indicated otherwise. Let $p_g(x|z)$ and $p_e(z|x)$ be the conditional distributions induced by $G$ and $E$. 
For a scalar function $h(x,y)$, let $\nabla_x h(x,y)$ denote its gradient with respect to $x$. For a vector function $g(x,y)$, let $\nabla_x g(x,y)$ denote its Jacobi matrix with respect to $x$. 


\section{Bidirectional generative modeling}

\subsection{General formulation}

The goal of bidirectional generative modeling is to match the two joint distributions
\begin{equation}\label{eq:obj_bigen}
	\min_{G,E}\ \mathcal{L}(p_e(x,z),p_g(x,z)),
\end{equation}
where $\mathcal{L}$ is any chosen distance measure between two probability distributions: $p_e(x,z)=p_r(x)p_e(z|x)$ is the encoder joint distribution and $p_g(x,z)=p_z(z)p_g(x|z)$ is the generator joint distribution. 
Let us consider a case where $\mathcal{L}$ is an $f$-divergence. Formally, given two density functions $p_e(x,z)$ and $p_g(x,z)$, abbreviated as $p_e$ and $p_g$ for simplicity, the $f$-divergence is defined by 
\begin{equation}\label{eq:f_div}
D_f(p_e,p_g)=\mathbb{E}_{p_g(x,z)}[f\left(r(x,z)\right)]=\mathbb{E}_{p_e(x,z)}\hspace{-3pt}\left[\tilde{f}\left(\frac{1}{r(x,z)}\right)\right]
\end{equation}
where $f:\mathbb{R}_+\to\mathbb{R}$ is a convex, lower-semicontinuous function satisfying $f(1) = 0$, $\tilde{f}(r)=rf(1/r)$ is a notation for convenience, and $r(x,z)=p_e(x,z)/p_g(x,z)$. Here we focus on a special case where $f$ is twice  continuously differentiable and strongly convex so that the second order derivative of $f$, denoted by $f''$, is always positive. The commonly used special cases are listed in Table \ref{tab:f-div}.

\begin{table}[b]
\centering\small
\caption{List of $f$-divergences: KL, reverse KL divergence, Jensen-Shannon divergence$\times2$, and squared Hellinger distance. When the divergence is symmetric, $f=\tilde{f}$, e.g., JS and $H^2$.}\label{tab:f-div}
\vskip 0.1in
\begin{tabular}{c|cccc}
\toprule
Name & $f(r)$ & $rf''(r)$ & $\tilde{f}(r)$ & $r\tilde{f}''(r)$  \\\midrule
KL & $r\log r$ & $1$ &  $-\log r$ & $1/r$ \\
RevKL & $-\log r$ & $1/r$ & $r\log r$ & $1$ \\
2JS & $-(r+1)\log\frac{1+r}{2}+r\log r$ & $\frac{1}{1+r}$ \\
$H^2$ & $(\sqrt{r}-1)^2$ & $\frac{1}{2\sqrt{r}}$ \\
\bottomrule
\end{tabular}
\end{table}


We parametrize the two transformations using deep neural networks and write $G_\theta(z,\epsilon)$ and $E_\phi(x,\epsilon)$. Examples of transformations include additive Gaussian $E(x,\epsilon)=E_{\phi_0}(x)+\phi_1\cdot\epsilon$ where $\epsilon$ follows a Gaussian distribution, or a black-box transformation $G_\theta(z,\epsilon)$ where $\epsilon$ is fed into the input or intermediate layers of the network $G$, leading to an implicit conditional distribution $p_g(x|z)$. 
A detailed discussion on the choice of transformations is given in Appendix \ref{app:choice_dist}. 
Finally our goal is to minimize the objective
\begin{equation}\label{eq:obj_fdiv}
L(\theta,\phi)=D_f(p_e,p_g),
\end{equation}
with respect to the parameters $\theta$ and $\phi$.

\subsection{Advantages}\label{sec:cc}

In this section, we discuss the advantages of the above formulation that minimizes the bidirectional $f$-divergence, especially the KL divergence which is the main choice in this paper. Our justifications cover three aspects of interest.

\medskip
\noindent
\textbf{Unidirectional generation} \ \ 
Decompose the joint KL as 
\begin{align}\label{eq:kl_decom_uni}
\begin{split}
\dkl(p_e(x,z),p_g(x,z))=\dkl(p_r(x),p_g(x))+\mathbb{E}_{x\sim p_r(x)}[\dkl(p_e(z|x),p_g(z|x))],
\end{split}
\end{align}
where we have marginal densities $p_g(x)=\mathbb{E}_{z\sim p_z}[p_g(x|z)]$, $p_e(x)=\mathbb{E}_{x\sim p_r(x)}[p_e(z|x)]$, and the posterior $p_g(z|x)=p_g(x,z)/p_g(x)$. We prove the equivalence in Appendix \ref{app:kl_decom}. Because KL is always non-negative, we know that by minimizing $\dkl(p_e,p_g)$, we minimize an upper bound of $\dkl(p_r(x),p_g(x))$, which is a standard objective for unidirectional generative modeling equivalent to maximum likelihood. By symmetry, same results also hold for $\dkl(p_z(z),p_e(z))$.
Therefore, this bidirectional formulation can approximately achieve the goal of unidirectional generation, while the performance depends on how well the bidirectional model can match the two conditional distributions $p_e(z|x)$ and $p_g(z|x)$, or ensure the consistency between the two transformations. 

\medskip
\noindent
\textbf{Mode coverage} \ \ 
Write the joint KL as
\begin{eqnarray*}
\dkl(p_e(x,z),p_g(x,z))=\mathbb{E}_{p_e(x,z)}\left[\log\frac{p_r(x)p_e(z|x)}{p_g(x)p_g(z|x)}\right].
\end{eqnarray*}
We see that it imposes a heavy penalty when $p_g(x)\approx0$ while $p_r(x)>0$, which is a case of mode dropping. In contrast, other divergences like JS, reverse KL divergence or Square Hellinger distance do not have this property. This is consistent with the commonly known conjecture in unidirectional generation that KL has an advantage in diminishing mode collapse. However this was not well verified in practice \cite{nowozin2016f}, partially due to lack of effective optimization. In this work equipped with the proposed optimization approach introduced in Section \ref{sec:age}, we are able to provide more convincing evidence on this.


\medskip
\noindent
\textbf{Cycle consistency} \ \ 
Another important issue in bidirectional generative modeling is the \emph{cycle consistency}, roughly meaning that the inferred latent variable $E(x)$\footnote{For simplicity we omit the randomness $\epsilon$ in the notations of encoder $E$ and generator $G$.} from data $x$ can generate a data $G(E(x))$ that is very close to $x$. When using stochastic transformations, we define the cycle consistency from a probabilistic view as the expected reconstruction log-likelihood:
\begin{equation}\label{eq:cc}
L_{\text{CC}}=-\mathbb{E}_{x\sim p_r(x)}\mathbb{E}_{z\sim p_e(z|x)}[\log p_g(x|z)].
\end{equation}
We would like to minimize the above quantity, that is, to reconstruct $x$ with a high probability, in order to ensure cycle consistency. Previous methods ensure cycle consistency using an explicit reconstruction error term, i.e., $\|G(E(x))-x\|$, and commonly used norms include $L_1$ and $L_2$ \cite{zhu2017unpaired,li2017alice} which can be regarded as special cases of (\ref{eq:cc}) with the generator being a Laplace or a Gaussian distribution.
Write the joint KL equivalently as
\begin{eqnarray}\label{eq:kl_decom_cc}
\begin{aligned}
\dkl(p_e,p_g)=-\mathbb{E}_{x\sim p_r}\mathbb{E}_{z\sim p_e(z|x)}[\log p_g(x|z)]+\mathbb{E}_{x\sim p_r}\left[\dkl(p_e(z|x),p_z(z))\right]+\mathbb{E}_{x\sim p_r}[\log p_r(x)],
\end{aligned}
\end{eqnarray}
which is proved in Appendix \ref{app:kl_decom}. Note that the third term on the right-hand side of (\ref{eq:kl_decom_cc}) is free of parameters. Hence our formulation equivalently minimizes an upper bound of $L_{\text{CC}}$ and thus ensures cycle consistency. 



\section{$f$-divergence minimization}\label{sec:age}



\subsection{Adversarial gradient estimation} 
We formally propose an optimization approach for the above formulation, leading to a general recipe for principled $f$-divergence based generative modeling. 
From the following theorem, we can evaluate the gradients of the $f$-divergence in objective (\ref{eq:obj_fdiv}) with respect to the parameters. The proof can be found in Appendix \ref{app:thm_proof}.


\begin{thm}\label{thm:grad}
Let $\mathcal D(x,z)=\log(p_e(x,z)/p_g(x,z))$. Then we have 
\begin{align}
\begin{split}\label{eq:grad}
\nabla_\theta L(\theta,\phi)&=-\mathbb{E}_{z\sim p_z(z),\epsilon\sim p_\epsilon}\left[s_\theta(G_\theta(z,\epsilon),z)\nabla_x \mathcal D(G_\theta(z,\epsilon),z)^\top \nabla_\theta G_\theta(z,\epsilon)\right],\\
\nabla_\phi L(\theta,\phi)&= \mathbb{E}_{x\sim p_r(x),\epsilon\sim p_\epsilon}\left[s_\phi(x,E_\phi(x,\epsilon))\nabla_z \mathcal D(x,E_\phi(x,\epsilon))^\top \nabla_\phi E_\phi(x,\epsilon)\right],
\end{split}
\end{align}
where $s_\theta(x,z)=\tilde{f}''\left(1/r(x,z)\right)/r(x,z)$ and $s_\phi(x,z)=f''(r(x,z))r(x,z)$ are scaling factors.
\end{thm}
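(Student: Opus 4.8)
The plan is to differentiate the two exact representations of $D_f$ in \eqref{eq:f_div} directly, being careful that the generator parameter $\theta$ (resp.\ the encoder parameter $\phi$) enters \emph{both} the reference measure and the density ratio $r$, and then to rewrite the resulting score-type integral as a pathwise (reparametrized) expectation. For $\nabla_\theta L$ I would use $L(\theta,\phi)=\int p_g(x,z)\,f(r(x,z))\,dx\,dz$ with $r=p_e/p_g$, where only $p_g$ depends on $\theta$. Differentiating under the integral sign produces one term that hits the measure $p_g$ and one that hits $r$ inside $f$; since $\nabla_\theta r=-(r/p_g)\nabla_\theta p_g$, the second term equals $-\int f'(r)\,r\,\nabla_\theta p_g$, and adding the two gives the compact form $\nabla_\theta L=\int\bigl(f(r)-rf'(r)\bigr)\nabla_\theta p_g(x,z)\,dx\,dz$. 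The next step is the elementary identity $f(r)-rf'(r)=\tilde f'(1/r)$, obtained by differentiating $\tilde f(u)=uf(1/u)$ and substituting $u=1/r$. Thus $\nabla_\theta L=\int\tilde f'\!\bigl(1/r(x,z)\bigr)\,\nabla_\theta p_g(x,z)\,dx\,dz$. Because the integrand is a fixed function $h$ of $(x,z)$, this integral is the score-function derivative $\nabla_\theta\{\mathbb{E}_{p_g}[h]\}$ with $h$ frozen, and under the reparametrization $(x,z)\sim p_g\ \Leftrightarrow\ z\sim p_z,\ x=G_\theta(z,\epsilon),\ \epsilon\sim p_\epsilon$ the pathwise gradient rule rewrites it as $\mathbb{E}_{z\sim p_z,\epsilon\sim p_\epsilon}\bigl[\nabla_x h(G_\theta(z,\epsilon),z)^\top\nabla_\theta G_\theta(z,\epsilon)\bigr]$ with $h(x,z)=\tilde f'(1/r(x,z))$. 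Finally, writing $r=e^{\mathcal D}$ so that $\nabla_x r=r\,\nabla_x\mathcal D$, the chain rule gives $\nabla_x h=-\bigl(\tilde f''(1/r)/r\bigr)\nabla_x\mathcal D=-s_\theta(x,z)\,\nabla_x\mathcal D(x,z)$, which is exactly the first line of \eqref{eq:grad}.

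For $\nabla_\phi L$ I would run the symmetric argument starting from the other representation $L(\theta,\phi)=\int p_e(x,z)\,\tilde f\!\bigl(1/r(x,z)\bigr)\,dx\,dz$, in which only $p_e$ (through $E_\phi$) depends on $\phi$; here the roles of $(p_e,f)$ and $(p_g,\tilde f)$ are interchanged, using that the conjugate of $\tilde f$ is $f$ again. The same two-term differentiation, the cancellation via $\nabla_\phi(1/r)=-(1/r)(1/p_e)\nabla_\phi p_e$, and the twin identity $\tilde f(u)-u\tilde f'(u)=f'(1/u)$ give $\nabla_\phi L=\int f'(r(x,z))\,\nabla_\phi p_e(x,z)\,dx\,dz$. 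Reparametrizing $(x,z)\sim p_e\ \Leftrightarrow\ x\sim p_r,\ z=E_\phi(x,\epsilon),\ \epsilon\sim p_\epsilon$ and using $\nabla_z\{f'(r)\}=f''(r)\,r\,\nabla_z\mathcal D=s_\phi(x,z)\,\nabla_z\mathcal D(x,z)$ then yields the second line of \eqref{eq:grad}.

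The chain-rule computations and the two $\tilde f$ identities are routine. The step I would be most careful about is the bookkeeping around the double appearance of the parameter: $\theta$ (resp.\ $\phi$) controls both the distribution we integrate against and the density ratio inside $f$, and the proof relies on first combining these two contributions so that the surviving integrand $\tilde f'(1/r)$ (resp.\ $f'(r)$) can legitimately be treated as parameter-free when the reparametrization/pathwise identity is invoked. I would also record explicitly the regularity assumptions that justify the interchanges of differentiation and integration — dominated-convergence-type conditions on $f$, $r$ and their derivatives, differentiability of $G_\theta$ and $E_\phi$ in both inputs and parameters, and finiteness of the relevant expectations.
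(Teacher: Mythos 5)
Your proposal is correct: the two-term differentiation under the integral, the cancellation giving $\nabla_\theta L=\int\bigl(f(r)-rf'(r)\bigr)\nabla_\theta p_g$, the identities $f(r)-rf'(r)=\tilde f'(1/r)$ and $\tilde f(u)-u\tilde f'(u)=f'(1/u)$, and the final chain-rule computations all check out, and the effective frozen integrand $\tilde f'(1/r)$ you isolate is exactly the quantity $\ell_2'(p_e,p_\theta)$ that appears in the paper's proof. Where you genuinely diverge is in how the conversion from the score-type integral $\int h\,\nabla_\theta p_\theta\,dx\,dz$ to the pathwise expectation $\mathbb{E}_{z,\epsilon}\bigl[\nabla_x h(G_\theta(z,\epsilon),z)^\top\nabla_\theta G_\theta(z,\epsilon)\bigr]$ is justified. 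You invoke it as the standard equivalence of the score-function and reparametrized gradients of $\mathbb{E}_{p_\theta}[h]$ for a fixed $h$; the paper instead proves this from scratch via Lemma \ref{lemma}, a change-of-variables computation yielding the transport-equation form $\nabla_\theta p_\theta=-\nabla_x p_\theta^\top g_\theta-p_\theta\,\nabla\cdot g_\theta$, followed by a product-rule rearrangement and the vanishing of $\int\nabla\cdot(\cdot)\,dx\,dz$. Your route is shorter and arguably more transparent, but it silently assumes what the paper's lemma establishes: that $p_\theta$ is pointwise differentiable in $\theta$ with an integrable derivative and that the boundary/divergence terms vanish, which is precisely the regularity you flag at the end as needing to be recorded. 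The paper's longer route buys an explicit description of $\nabla_\theta p_\theta$ for implicit pushforward densities, which is not otherwise obvious; your route buys brevity at the cost of treating that description as a known fact. (One terminological nit: $\tilde f$ is the perspective-type involution $\tilde{\tilde f}=f$, not the convex conjugate $f^*$; your identities are nonetheless the right ones.)
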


This theorem presents a general formula to evaluate gradients that applies to various divergences with the only difference being the scaling, which unifies the treatment of $f$-divergence based generative modeling. 

Notice that the gradients in (\ref{eq:grad}) depend on the unknown densities $p_e$ and $p_g$ and thus cannot be obtained from data. We use a discriminator to estimate them. 
Let $D(x,z)$ be the solution to the empirical Logistic regression problem that distinguishes between the data-latent pairs from $p_e$ and $p_g$:\footnote{Logistic regression used in this paper is the nonlinear one. Note that (\ref{eq:obj_logistic}) is equivalent to the loss used in many papers: $\max_d[\sum_{(x,z)\in S_e}\log(d(x,z))/|S_e|+\sum_{(x,z)\in S_g}\log(1-d(x,z))/|S_g|]$ where $d=1/(1+e^{-D})\in(0,1)$.}
\begin{equation}\label{eq:obj_logistic}
\min_{D'}\left[\frac{1}{|S_e|}\sum_{(x,z)\in S_e}\log(1+e^{-D'(x,z)}) + \frac{1}{|S_g|}\sum_{(x,z)\in S_g}\log(1+e^{D'(x,z)})\right]
\end{equation}
where $S_e$ and $S_g$ are finite samples from $p_e(x,z)$ and $p_g(x,z)$ respectively. When the number of samples is sufficiently large, the statistical consistency theory of Logistic regression \cite{zhang2004statistical} indicates that $D(x,z)\approx\mathcal D(x,z)$.

Replacing $\mathcal D$ and $r$ in the gradients (\ref{eq:grad}) with $D$ and $\hat{r}=\exp(D)$, we obtain the maximum likelihood estimator (MLE) for the gradients. We then optimize the objective using stochastic gradient descent (SGD) and end up with a practical implementation. The convergence of the procedure follows naturally from the consistency of the estimation and the convergence results of SGD. Since the proposed approach involves an adversarially learned discriminator, we call it \textit{Adversarial Gradient EStimation (AGES)}. We adopt early stopping in training $D$ to avoid overtrained extreme discriminators. We summarize the procedure of bidirectional generative modeling using AGES in Algorithm \ref{algo}.

In addition, the technique introduced in Theorem \ref{thm:grad} is not limited to bidirectional generation, but can be generally applied to other tasks involving $f$-divergence optimization such as unidirectional generative modeling, mutual information optimization or $f$-divergence as a regularization term (e.g., in WAE \cite{Tolstikhin2017WassersteinA} or VAE-based disentanglement methods \cite{kim2018disentangling}). In Appendix \ref{app:uni_age} we present the gradient formula and estimation in unidirectional generation.

{\centering
\begin{minipage}{.75\linewidth}
\vskip 0.1in
\begin{algorithm}[H]
\DontPrintSemicolon
\KwInput{training set, $f$-divergence, initial parameters $\theta,\phi,\psi$, batch-size $n$}
\While{not convergence}{
\For{multiple steps}{
Sample $\{x_1,\ldots,x_n\}$ from the training set\\
Sample $\{z_1,\ldots,z_n\}$ from the prior $p_z(z)$\\
Sample $\{\epsilon_1,\ldots,\epsilon_n\}$ and $\{\epsilon'_1,\ldots,\epsilon'_n\}$ from $p_\epsilon$ and $p_{\epsilon'}$\\
Update $\psi$ by descending the stochastic gradient:
$\frac{1}{n}\sum_{i=1}^n \nabla_\psi\left[\log(1+e^{-D_\psi(x_i,E_\phi(x_i,\epsilon_i))})+\log(1+e^{D_\psi(G_\theta(z_i,\epsilon'_i),z_i)})\right]$
}
Sample $\{x_1,\ldots,x_n\}$, $\{\epsilon_1,\ldots,\epsilon_n\}$, $\{z_1,\ldots,z_n\}$, and $\{\epsilon'_1,\ldots,\epsilon'_n\}$ as above\\
Compute $\theta$-gradient:
$-\frac{1}{n}\sum_{i=1}^n s_\theta(G_\theta(z_i,\epsilon'_i),z_i)\nabla_\theta D_\psi(G_\theta(z_i,\epsilon'_i),z_i)$\\
Compute $\phi$-gradient:
$\frac{1}{n}\sum_{i=1}^n s_\phi(x_i,E_\phi(x_i,\epsilon_i)) \nabla_\phi D_\psi(x_i,E_\phi(x_i,\epsilon_i))$\\
Update parameters $\theta,\phi$ using the gradients
}
\KwReturn{$\theta,\phi$}
\caption{Bidirectional Generative Modeling using AGES}
\label{algo}
\end{algorithm}
\end{minipage}
\par
}

\subsection{Scaling clipping}\label{sec:sc}
In this section, we introduce a technique to reduce the variance and stabilize training of AGES algorithms for various divergences, and further obtain a modified algorithm that is more applicable on real datasets. 
From Table \ref{tab:f-div} we know that for all commonly used $f$-divergences, one or both of the scaling factors in (\ref{eq:grad}) are unbounded above or can infinitely approach $0$, which may lead to the exploding or vanishing gradient problem especially on real datasets. To address this, we propose to clip the scaling factors of each divergence into a bounded positive range.

From the definition $\tilde{f}(r)=rf(1/r)$ we know $s_\theta(x,z)=s_\phi(x,z) r(x,z)$. Consider the nearly optimal case where the two joint distributions $p_e(x,z)$ and $ p_g(x,z)$ approximately match, and hence $r(x,z)\approx1$. Because $f''$ and $\tilde{f}''$ are positive and continuous, we have $s_\theta(x,z)\approx s_\phi(x,z)\approx f''(1)$ which is a positive constant. Therefore we propose to clip the density ratio $r$ into a bounded range containing its optimal value 1. Then the consequent scaling factors globally fall into a bounded positive range containing $f''(1)$. We call this technique \textit{scaling clipping (SC)}. In this way the corresponding gradient estimator has a smaller variance and we obtain a modified family of algorithms for different divergences with stable gradients. 

Motivated by the local property that $s_\theta(x,z)\approx s_\phi(x,z)\approx$ constant, we consider the extreme case of scaling clipping where we set $s_\theta=s_\phi=1$. By this means we obtain an algorithm which is locally equivalent to simultaneously minimizing several divergences, i.e., all the $f$-divergences with strongly convex $f$. We hence call it \textit{AGES-ALL}. As scaling clipping, AGES-ALL is globally bounded and thus has a smaller variance and alleviates the vanishing or exploding gradient problem.

\subsection{Comparison with $f$-GAN}\label{sec:fgan}

$f$-GAN \cite{nowozin2016f} extends GAN to general $f$-divergences and makes use of their variational representation for optimization. One can also derive a bidirectional version of $f$-GAN by augmenting the variational function in $f$-GAN to a joint version with both $x$ and $z$ as input. The variational function serves as the discriminator in our method in the sense that they both estimate a function of the density ratio $p_e/p_g$. Mathematically, bidirectional $f$-GAN solves the following minimax optimization problem:
\begin{equation*}
\min_{G,E}\max_D \left\{\mathbb{E}_{p_e(x,z)}[a_f(D(x,z))]+\mathbb{E}_{p_g(x,z)}[-f^*(a_f(D(x,z)))]\right\}
\end{equation*}
where $f^*(t)=\sup_{r\in{\rm dom}_f}\{rt-f(r)\}$ is the conjugate function of $f$ and $a_f$ is an output activation function specific to the $f$-divergence used.
However, $f$-GAN generally obtains different training objectives from AGES given the same distance measure, especially the loss of the discriminator.

For instance, Table \ref{tab:kl_obj} lists the training objectives of $f$-GAN and AGES for KL. Note that for comparison we present AGES in the ``GAN form" where we separately write the objectives for the three agents -- discriminator $D$, encoder $E$ and generator $G$, and in each objective we should ignore the dependence of it on the other two agents according to Theorem \ref{thm:grad}. We notice that using KL as the objective, AGES differs from $f$-GAN only in the $D$ loss. Viewing the role of $D$ as the density ratio estimator, we know that AGES obtains the MLE with higher efficiency than $f$-GAN. Moreover, the exponential in the $D$ loss given by $f$-GAN may cause instability during training. We conduct experiments to verify the advantage of AGES against $f$-GAN.

\begin{table}[b]
\centering\small
\caption{Training objectives of $f$-GAN and AGES for KL.}
\vskip 0.1in
\begin{tabular}{c|c}
\toprule
Method &{\centering Objectives} \\
\midrule
AGES & \small{\begin{tabular}{@{}@{}l@{}}$D$: $\mathbb{E}_{p_e}[\log(1+e^{-D(x,z)})]+\mathbb{E}_{p_g}[\log(1+e^{D(x,z)})]$ \\ $E$: $\mathbb{E}_{p_e}[D(x,z)]$ \\ $G$: $-\mathbb{E}_{p_g}[e^{D(x,z)}]$\end{tabular}} \\\midrule
$f$-GAN & \hspace{-2cm}\small{\begin{tabular}{@{}@{}l@{}}$D$: $-\mathbb{E}_{p_e}[D(x,z)]+\mathbb{E}_{p_g}[e^{D(x,z)-1}]$ \\ $E$: $\mathbb{E}_{p_e}[D(x,z)]$ \\ $G$: $-\mathbb{E}_{p_g}[e^{D(x,z)-1}]$ \end{tabular}} \\\bottomrule
\end{tabular}
\label{tab:kl_obj}
\end{table}

We summarize the major differences between $f$-GAN and AGES for $f$-divergence minimization as follows:
\begin{itemize}\vspace{-0.05in}
\setlength{\itemsep}{1pt}
\setlength{\parskip}{0pt}
\item Based on Theorem \ref{thm:grad}, our framework provides a more unified treatment of various divergences than $f$-GAN: $f$-GAN uses different $D$ losses with artificially specified output activations $a_f$ for each divergence, while we obtain similar algorithms for various divergences with the only difference being the scaling in gradients.
\item For estimating the density ratio (or its function), AGES always applies Logistic regression which owns the highest asymptotic statistical efficiency, while $f$-GAN, limited by the minimax formulation, uses other losses except for JS divergence and hence is not as efficient.
\item For practical considerations, our scaling clipping technique addresses the unstable gradient issue via clear justification on the globally bounded scaling, while $f$-GAN is heuristically motivated following GAN.
\end{itemize}\vspace{-0.1in}


\section{Unifying VAEs and GANs}\label{sec:unify}
In this section we establish a unified view of VAEs and GANs. We regard BiGAN as the full version of GAN and point out that both VAEs and GANs are special cases of the general bidirectional formulation optimized using AGE, with different divergences and distribution assumptions.

\subsection{Variational Autoencoders}


VAEs \cite{kingma2013auto} learn the encoder $p_e(z|x)$ and the generator $p_g(x|z)$ by minimizing the negative variational lower bound or evidence lower bound (ELBO)
\begin{align}\label{eq:obj_vae}
\begin{split}
L_{\rm VAE}=-\mathbb{E}_{x\sim p_r}[\mathbb{E}_{z\sim p_e(z|x)}[\log p_g(x|z)]-\dkl(p_e(z|x),p_z(z))].
\end{split}
\end{align}
According to (\ref{eq:kl_decom_cc}), we have the following relationship between $L_{\rm VAE}$ and our objective in (\ref{eq:obj_fdiv}) with $D_f$ being KL:
\begin{equation*}
L_{\rm VAE}=\dkl(p_e(x,z),p_g(x,z))-\mathbb{E}_{p_r(x)}[\log p_r(x)].
\end{equation*}
Because the second term on the right-hand side is free of any learnable parameters, minimizing $L_{\rm VAE}$ is equivalent to minimizing (\ref{eq:obj_fdiv}). In the original VAE, both the encoder and generator distributions are set as factorized Gaussian distributions, leading to an analytic form of $L_{\rm VAE}$ that can be easily optimized. Therefore VAE is a special case of our general formulation optimized with AGES when $\mathcal{L}$ is KL and gradients can be evaluated analytically. 


However, the Gaussian assumption in the original VAE may not be expressive enough \cite{kingma2016improved,huszar2017variational}, especially for complex and high-dimensional data. Adversarial Variational Bayes (AVB) \cite{mescheder2017adversarial} extends the Gaussian encoder in VAE to an implicit distributions. Then the KL term in the objective (\ref{eq:obj_vae}) no longer has an explicit form. AVB introduces a discriminator $\mathcal{D}'(x,z)=\log(p_e(z|x)/p_z(z))$ and compute the gradient of the KL term w.r.t. encoder parameter $\phi$ as follows:
\begin{align*}
&\nabla_\phi\mathbb{E}_{x\sim p_r}[\dkl(p_e(z|x),p_z(z))]
=\mathbb{E}_{x\sim p_r,\epsilon\sim p_\epsilon}\left[\nabla_z \mathcal{D}'(x,E_\phi(x,\epsilon))^\top \nabla_\phi E_\phi(x,\epsilon)\right],
\end{align*}
which can be derived according to Theorem \ref{thm:grad} by noting that $f''(r)r=1$ for KL. 
Notice the relationship between $\mathcal{D}'$ and $\mathcal{D}$ defined in Theorem \ref{thm:grad}:
$\mathcal{D}'(x,z)=\mathcal{D}(x,z)+\log p_g(x|z)-\log p_r(x),$
where the difference only depends on learnable parameters through $\log p_g(x|z)$ that has an analytic form since AVB uses a Gaussian generator. Therefore, AVB can also be regarded as a special case of our formulation involving partial gradient estimation. 

\subsection{Bidirectional Generative Adversarial Networks}

BiGAN \cite{Donahue2017AdversarialFL,Dumoulin2017AdversariallyLI} directly adopts the original GAN in bidirectional generative modeling. With an additional encoder, it formulates the problem as a minimax game:
\begin{align*}
\min_{G,E}\max_D\quad &-\mathbb{E}_{p_e(x,z)}[\log(1+e^{-D(x,z)})]-\mathbb{E}_{p_g(x,z)}[\log(1+e^{D(x,z)})]\\
=&-\mathbb{E}_{x\sim p_r,\epsilon\sim p_\epsilon}[\log(1+e^{-D(x,E_\phi(x,\epsilon))})]-\mathbb{E}_{z\sim p_z,\epsilon\sim p_\epsilon}[\log(1+e^{D(G_\theta(z,\epsilon),z)})],
\end{align*}
where the equality follows from the reparametrization trick.
In our formulation (\ref{eq:obj_fdiv}) when we choose $D_f$ as JS, applying the formula in Theorem \ref{thm:grad}, we obtain the gradients as follows:
\begin{align*}
\nabla_\theta L&=-\mathbb{E}_{z\sim p_z,\epsilon\sim p_\epsilon}[\nabla_\theta\log(1+e^{D(G_\theta(z,\epsilon),z)})]\\
\nabla_\phi L &= -\mathbb{E}_{x\sim p_r,\epsilon\sim p_\epsilon}[\nabla_\phi\log(1+e^{-D(x,E_\phi(x,\epsilon))})]
\end{align*}
where the dependence of $D$ on parameters $\theta$ and $\phi$ is ignored when taking the gradients. 
Comparing them with the above minimax problem, we know that when $D$ is fixed, both formulations share the same form of gradients. Hence they are equivalent and BiGAN is again a special case of our bidirectional formulation with AGES.


\section{Experiments}

We evaluate our method in three aspects. First we investigate the performance of the proposed algorithm in divergence optimization, to verify that AGES can indeed minimize the divergence effectively. Second we explore the influence of different divergences and bidirectional formulations on the issue of mode collapse. Lastly we apply the bidirectional generative models learned with AGES on real datasets and test the performance in both generation and representation, which further shows the effectiveness of our method. All the details of experimental setup are given in Appendix \ref{app:exp}.\footnote{The code is available at \url{https://github.com/xwshen51/AGES}.}

\subsection{Divergence optimization}\label{sec:exp_opt}

In order to make a fair comparison, we consider the scenario where the original VAE applies and use the same objective function and model settings for different methods. Specifically, we choose $\mathcal{L}$ as the KL divergence and set both encoder $p_e(z|x)$ and generator $p_g(x|z)$ as factorized Gaussians. As a result, problem (\ref{eq:obj_bigen}) is equivalent to minimizing $L_{\rm VAE}$ (\ref{eq:obj_vae}) which has an analytic form so that we can compute the exact objective values for comparison.

\smallskip
\noindent
\textbf{Datasets} \ \ 
To make the model assumptions suitable for data, we synthesize a toy dataset from a 2D mixture of Gaussians (MoG) with 9 components laid out on a grid. We assume imbalanced class probabilities with 4 minority classes and 5 majority classes, which makes it a decently hard task.

\smallskip
\noindent
\textbf{Methods for comparison} \ \ 
The first one is VAE where we analytically minimize $L_{\rm VAE}$ using SGD. The second one is the proposed AGES with $\mathcal{L}$ being the KL divergence, which we call AGES-KL. The third one is the bidirectional $f$-GAN with KL, abbreviated as $f$-GAN-KL. Note that the solution obtained from VAE is regarded as the ``ideal" solution, since it makes use of the analytic form of the objective while the other two use estimated gradients and minimax approximation respectively. 

\smallskip
\noindent
\textbf{Metrics} \ \ 
We use three metrics to evaluate the performance. The first one is the value of objective function $L_{\rm VAE}$ which directly indicates the optimization performance. The second is $L_{\text{CC}}$ in (\ref{eq:cc}) to measure the cycle consistency. The last one is the marginal negative log-likelihood $-\mathbb{E}_{p_r(x)}[\log p_g(x)]$ to validate the performance in unidirectional generation. We estimate the first two metrics with samples and the third one using the annealed importance sampling (AIS) \cite{wu2016quantitative} with 1000 intermediate distributions and 30 parallel chains on 10,000 test examples.

As reported in Table \ref{tab:ais}, AGES is comparable to VAE in all three metrics, indicating that our proposed method can minimize the KL objective almost as effective as optimizing the closed-form objective when available. Thus AGES is a good alternative of VAE especially when we use more general encoder/generator distributions to which VAE does not apply. In contrast, $f$-GAN performs far worse due to the low statistical efficiency of the discriminator. Moreover, $f$-GAN is highly unstable with a large variability between multiple repetitions, which is also observed in \cite{nowozin2016f}. This experiment directly suggests that AGES outperforms $f$-GAN in both effective optimization and training stability. 

\begin{table}[h]
\centering\small
\caption{Metrics for the objective value, cycle consistency and unidirectional generation (the smaller the better). All results are averaged over 10 trials shown with the standard error.}
\vskip 0.1in
\begin{tabular}{c|ccc}
\toprule
Method & Objective & CC & Uni-gen \\\midrule
VAE\hspace{-4pt} & 2.739 (0.02)\hspace{-5pt} & 0.025 (0.07)\hspace{-5pt} & 0.753 (0.01) \\
AGES-KL\hspace{-4pt} & 2.784 (0.06)\hspace{-5pt} & -0.018 (0.16)\hspace{-5pt} & 0.737 (0.02) \\
$f$-GAN-KL\hspace{-4pt} & 3.786 (1.05)\hspace{-5pt} & 1.173 (1.04)\hspace{-5pt} & 1.401 (1.24) \\
\bottomrule
\end{tabular}
\label{tab:ais}
\end{table}

\subsection{Mode coverage}

In this section we focus on the influence of the choice of divergence on the issue of mode coverage. Note that orthogonal to methods that target on solving mode collapse 
\cite{che2016mode,srivastava2017veegan,lin2018pacgan}, our discussion here only considers the factor of divergence. 

\smallskip
\noindent
\textbf{Datasets} \ \ 
We consider two scenarios. One is a synthetic MoG dataset like above while we add the number of components to 25 with 12 minority classes. In this case, the 25 modes have imbalanced probabilities and the minority modes could be easily lost. The other is the stacked MNIST dataset \cite{che2016mode,lin2018pacgan}, which is constructed by stacking three randomly sampled MNIST digits. Hence it has 1000 modes with uniform probabilities.

\smallskip
\noindent
\textbf{Methods for comparison} \ \ 
We mainly focus on different choices of divergence as the objective with AGES for optimization. We compare KL, JS, Reverse KL, and Squared Hellinger distance. In addition, we also compare with $f$-GAN-KL, our proposed AGES-ALL, and two external baselines non-saturating BiGAN \cite{Donahue2017AdversarialFL,Dumoulin2017AdversariallyLI} with the ``logD" trick (abbreviated as logD-GAN) and Hinge loss used in BigBiGAN \cite{donahue2019large} for comparison. 

\smallskip
\noindent
\textbf{Metrics} \ \ 
We use two previously used metrics. One is the number of modes captured by a generator. For the above two labelled datasets, we can compute this number using pre-trained classifiers. Another metric is the reverse KL divergence (since the KL divergence is infinity when some mode is missing) between the mode distribution of generated samples and the real mode distribution (which are discrete and tractable).

\subsubsection{MoG}

We use deterministic encoders and generators in this experiment. 
The results are reported in Table \ref{tab:mog}, from which we can see that KL divergence has a significant benefit in mode capturing over other divergences or formulations. Reverse KL performs far worse than KL, and the Squared Hellinger distance which is defined in between KL and reverse KL performs moderately. AGES-KL tends to be better and more stable than $f$-GAN-KL, which is consistent with the results in Section \ref{sec:exp_opt}. AGES-ALL is slightly worse than several divergences on this toy dataset, but still covers more modes than Reverse KL, logD-GAN and Hinge. 
Figure \ref{fig:mog} visually shows the reconstruction performance of various divergences with additional results given in Appendix \ref{app:samples}, where we can clearly observe how mode collapse occurs for all divergences except KL. 

\begin{table}
\centering\small
\caption{Two measures of mode collapse on the imbalanced MoG dataset. All results are averaged over 10 trials shown with the standard error.}
\vskip 0.1in
\begin{tabular}{c|cc}
\toprule
Method & Modes & KL\\\midrule
AGES-KL & \textbf{24.9} (0.36) & \textbf{0.0284} (0.0035) \\
$f$-GAN-KL & 24.1 (1.33) & 0.0477 (0.0418) \\
AGES-$H^2$ & 24.5 (0.94) & 0.0439 (0.0102) \\
AGES-RevKL & 21.8 (3.21) & 0.2498 (0.2120) \\
GAN (JS) & 24.1 (1.23) & 0.0462 (0.0159) \\
AGES-ALL & 23.2 (1.33) & 0.1133 (0.0273) \\
logD-GAN \cite{Donahue2017AdversarialFL,Dumoulin2017AdversariallyLI} & 20.0 (3.14) & 0.3437 (0.1723) \\
Hinge \cite{donahue2019large} & 20.9 (0.92) & 0.1929 (0.0366) \\
\bottomrule
\end{tabular}
\label{tab:mog}
\end{table}

\begin{figure}
\centering
\subfigure[Real]{
\includegraphics[width=0.22\textwidth]{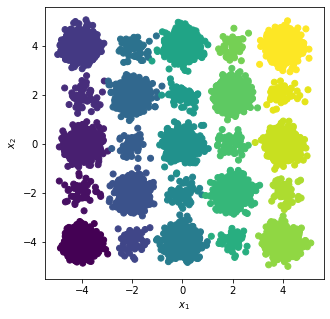}}
\subfigure[AGES-KL]{
\includegraphics[width=0.22\textwidth]{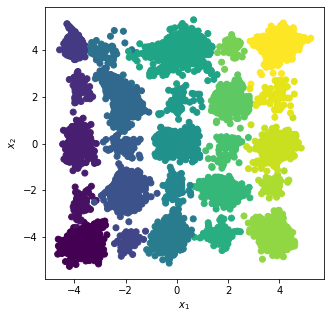}}
\subfigure[AGES-$H^2$]{
\includegraphics[width=0.22\textwidth]{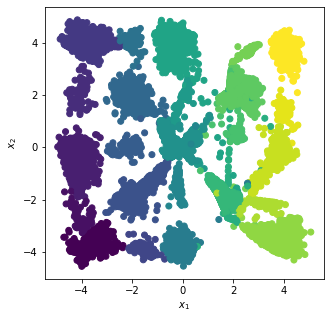}}
\subfigure[AGES-RevKL]{
\includegraphics[width=0.22\textwidth]{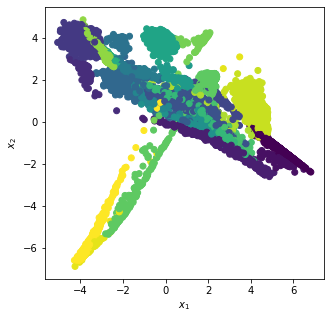}}\vspace{-0.1in}
\caption{Reconstructions from bidirectional generative models on the MoG dataset using various divergences as the objective.}
\label{fig:mog}
\end{figure}

\subsubsection{Stacked MNIST}
On real datasets, we find that scaling clipping is necessary in order to maintain stable gradients. In appendix \ref{app:sc} we show how the AGES algorithms behave with varying clipping ranges and conclude that AGES-ALL generally performs well and stably. Thus, for real data tasks we adopt AGES-ALL that approximately minimizes several $f$-divergences (including KL) simultaneously with stable training. We try a deterministic encoder and generator called ``AGES-ALL(d)" and use Gaussian encoders and implicit generators for all other methods. The details of adding randomness is given in Appendix \ref{app:choice_dist}.

The results in Table \ref{tab:mnist} demonstrate the effectiveness of AGES-ALL in diminishing mode collapse on Stacked MNIST. Furthermore, we observe advantages of stochastic encoders and generators over deterministic ones in both mode covering and reconstruction accuracy (discussed later in Section \ref{sec:real}). This is consistent with the arguments that stochasticity in transformations increases the expressiveness of generative models and adding noise to the generator helps alleviate mode collapse. We notice that $f$-GAN-KL tends to perform poorly and even collapse on this dataset, leading to a far worse result.

\begin{table}
\centering\small
\caption{Two measures of mode collapse and reconstruction accuracy on Stacked MNIST. All results are averaged over 10 trials shown with the standard error.}
\vskip 0.1in
\begin{tabular}{c|ccc}
\toprule
Method & Modes & KL & Recon.(\%)\\\midrule
AGES-ALL(d) & 971.7 (20.1) & 0.42 (0.10) & 81.7 (1.8) \\
AGES-ALL & \textbf{981.2} (9.5) & \textbf{0.36} (0.05) & \textbf{86.5} (1.6) \\
$f$-GAN-KL & 466 (452.5) & 3.49 (2.45) & 27.8 (1.6) \\
GAN (JS) & 954.2 (17.3) & 0.71 (0.08) & 64.2 (2.2) \\
logD-GAN \cite{Donahue2017AdversarialFL,Dumoulin2017AdversariallyLI} & 932.1 (59.8) & 0.55 (0.14) & 81.3 (2.3) \\
Hinge \cite{donahue2019large} & 959.9 (17.1) & 0.53 (0.10) & 84.1 (1.8) \\
\bottomrule
\end{tabular}
\label{tab:mnist}
\end{table}

\subsection{Real data generation and representation} \label{sec:real}

In this section we apply our method on real datasets of digits (Stacked MNIST), human faces (CelebA \cite{liu2015deep}) and natural images (ImageNet \cite{russakovsky2015imagenet}) to extensively evaluate the performance of our method in data generation and representation. Stacked MNIST is an elementary dataset; CelebA contains a large number of well-aligned face images with large variations of attributes; ImageNet contains real-world images with a huge diversity and thus is one of the most elusive tasks in image synthesis. 

For fair comparison, we mainly consider three approaches with non-saturating losses and high training stability on real datasets: AGES-ALL (proposed), and two previous state-of-the-art bidirectional generative models: Hinge (BigBiGAN \cite{donahue2019large}) and logD-GAN (BiGAN \cite{Donahue2017AdversarialFL,Dumoulin2017AdversariallyLI}). For all methods, we apply Gaussian encoders and implicit generator distributions with details given in Appendix \ref{app:choice_dist}.
Due to limited computational resource, we resize the images from CelebA and ImageNet to the resolution of $64\times64$ and use relatively small network architectures and training scale with details given in Appendix \ref{app:exp}. 

\subsubsection{Generation}

Generated samples on three datasets are shown in Figure \ref{fig:gen}, with the Fr\'echet Inception Distances (FIDs) \cite{heusel2017gans} reported in Table \ref{tab:fid}. Additional samples are presented in Appendix \ref{app:samples}. The results demonstrate the advantage of our method to generate images with high fidelity, which is a consequence of effective optimization and merits of our bidirectional generative formulation. 

Furthermore, we find that the bidirectional generative models (BGMs) achieve comparable performance to unidirectional generative models (UGMs, row 1 in Table \ref{tab:fid}). One explanation is our justification on the advantage of the bidirectional formulation in unidirectional generation. For ImageNet with such a huge diversity, the generator in a BGM benefits from the encoder and achieves much better performance than that in a UGM. 
Hence, bidirectional generative models should be favored over unidirectional ones since they can achieve the goal of the latter while additionally learn an inference model which is useful in many applications.
\begin{figure*}
\centering
\subfigure[Stacked MNIST]{
\includegraphics[width=0.2\textwidth]{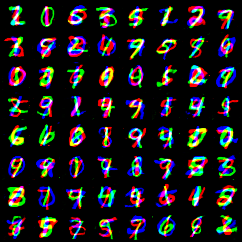}}
\subfigure[CelebA]{
\includegraphics[width=0.3\textwidth]{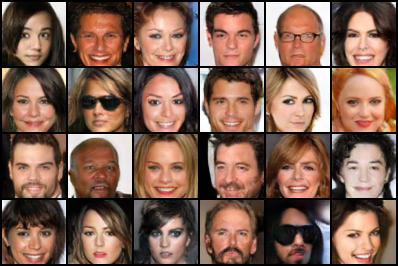}}
\subfigure[ImageNet]{
\includegraphics[width=0.45\textwidth]{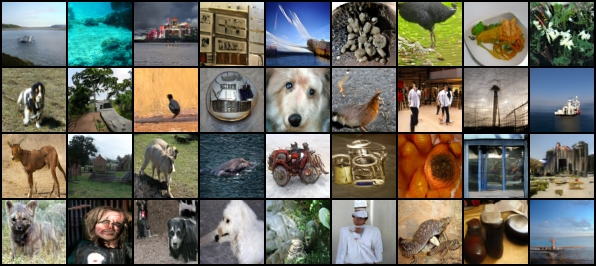}}\vspace{-0.15in}
\caption{Generations from AGES-ALL on three datasets.}
\label{fig:gen}
\vskip -0.15in
\end{figure*}

\begin{table}
\centering\small
\caption{FIDs on three datasets. Our AGES-ALL outperforms other bidirectional approaches on three datasets under same experimental settings, while is comparable to large scale BigBiGAN on ImageNet which uses much larger networks, batch sizes and higher resolution of input images.}
\vskip 0.1in
\begin{tabular}{c|ccc}
\toprule
Method & Stacked MNIST & CelebA & ImageNet \\\midrule
AGES-UGM & 4.89 & 8.91 & 19.33 \\\hline
AGES-ALL & \textbf{4.40} & \textbf{8.51} & \textbf{16.38} \\
Hinge \cite{donahue2019large} & 5.63 & 10.04 & 19.02 \\
BigBiGAN \cite{donahue2019large} & - & - & \textbf{15.82}\footnotemark \\
logD-GAN \cite{Donahue2017AdversarialFL,Dumoulin2017AdversariallyLI} & 5.56 & 11.98 & 19.81 \\
\bottomrule
\end{tabular}
\label{tab:fid}
\vskip -0.1in
\end{table}
\footnotetext{This is the result on $64\times64$ ImageNet reported in \cite{donahue2019large}.}

\subsubsection{Representation}


In order to explore the property of the latent representations learned by our BGM, we investigate the reconstruction performance, latent space interpolation, and nearest neighbors. 


We would like to investigate how much information, especially high-level semantics, is preserved in the inferred representation $E(x)$ by looking at the reconstruction $G(E(x))$. Since our concern is not in the pixel level, we measure the reconstruction performance by how much high-level features or attributes it can retain. We use both qualitative illustration and quantitative metrics. 
The last column of Table \ref{tab:mnist} reports the classification accuracy of the reconstructions on Stacked MNIST and shows the advantage of AGES in preserving category information.
Figure \ref{fig:celeba_recon} and \ref{fig:imagenet_recon} present the reconstructions on CelebA and ImageNet validation sets, with additional samples given in Appendix \ref{app:samples}. AGES achieves much more faithful reconstructions than other methods, which supports our theoretical justifications on cycle consistency in Section \ref{sec:cc}. Although the reconstructions are generally not perfect in the pixel level, our method is able to capture high-level attributes and semantics. 
This property is essentially demanded in learning causal representations and is worth investigating in future work. 

\begin{figure}
\centering
\includegraphics[width=0.5\textwidth]{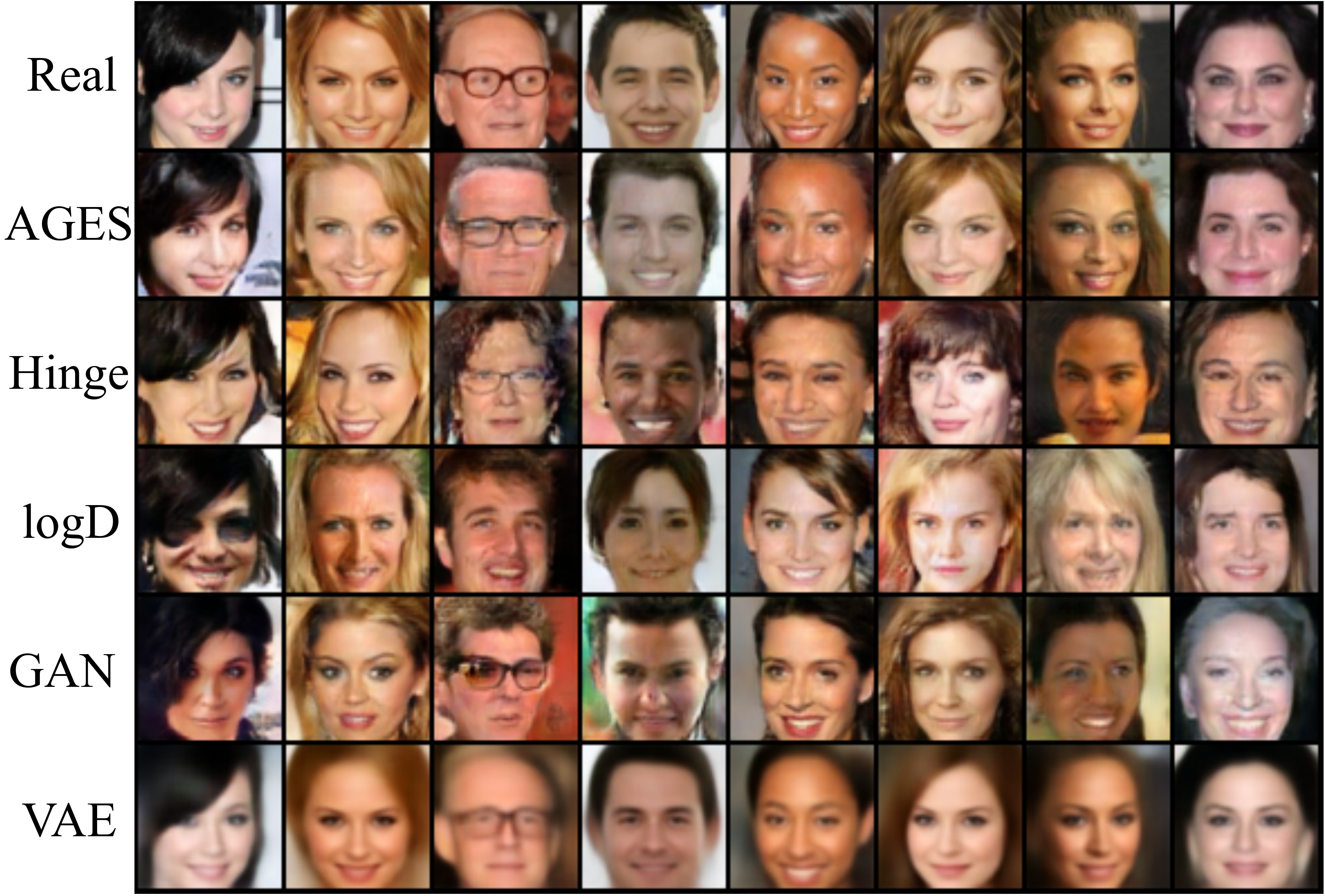}\vspace{-0.1in}
\caption{Reconstructions on CelebA. The reconstructions from AGES are sharp and tend to share the same attributes as the original images, such as, azimuth, emotion, hair/skin color, glasses, etc.}
\label{fig:celeba_recon}
\vskip -0.1in
\end{figure}
\begin{figure}
\centering
\includegraphics[width=0.5\textwidth]{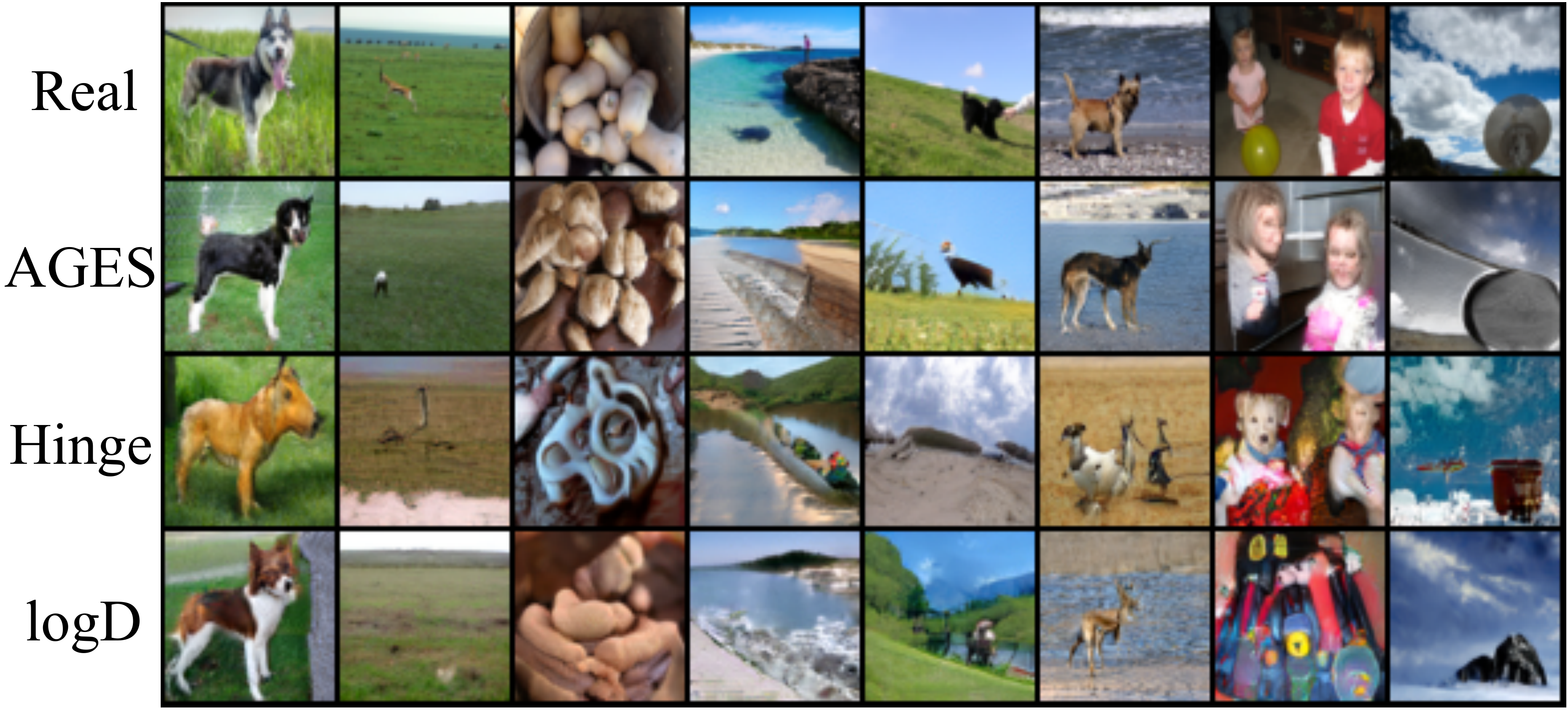}\vspace{-0.1in}
\caption{Reconstructions on ImageNet. The reconstructions from AGES are more often belonging to the same category as the original images with similar texture, position, and pose.}
\label{fig:imagenet_recon}
\end{figure}

\begin{figure}
\centering
\subfigure[Latent interpolation]{\label{fig:interp}
\includegraphics[width=0.665\textwidth]{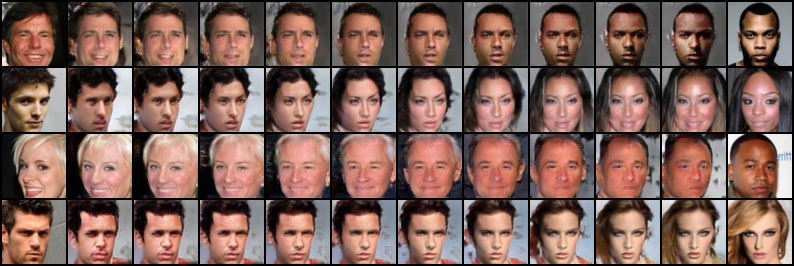}}
\hspace{.1in}
\subfigure[Nearest neighbors]{\label{fig:nn}
\includegraphics[width=0.28\textwidth]{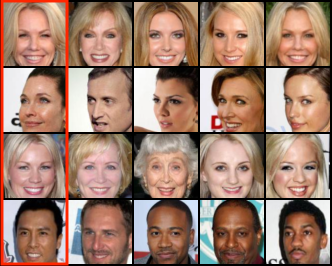}}
\vspace{-0.1in}
\caption{(a) Latent space interpolations on CelebA validation set using AGES. The left and right columns are real images; the columns in between are generated from the latent variables interpolated linearly from the two inferred representations of the real. (b) Nearest neighbors in the learned latent space. The one in the red rectangle is a query image, and the remaining ones are its four nearest neighbors. All images are from the validation set.}
\end{figure}

Figure \ref{fig:interp} shows latent space interpolations between validation samples which exhibit smooth semantic transitions, verifying the smooth and well-dispersed latent space learned by our model. 
As shown in Figure \ref{fig:nn}, the neighbors in the latent space often share the same high-level features with the query image, indicating that the learned representations are mostly consistent with visual semantics. Results from other methods in Appendix \ref{app:samples} suggest the advantage of ours.


\section{Related work}

\subsection{Bidirectional generative modeling}

VAE is often regarded to be far different from GANs. We point out in Section \ref{sec:unify} that both are special cases of our bidirectional generative modeling optimized using AGES. Both are limited to specific objectives and model assumptions, and hence do not apply to the general $f$-divergence formulation with more expressive generator/encoder distributions. 

Along the extensions, apart from AVB \cite{mescheder2017adversarial} which is also a special case of ours involving partial gradient estimation, approaches like VAE-GAN \cite{larsen2015autoencoding} and AAE \cite{makhzani2015adversarial} or more general WAE \cite{Tolstikhin2017WassersteinA} enhance VAE using unidirectional GANs. The former uses a GAN to match the data distributions while the latter uses one in the latent space. These formulations are mainly motivated by certain specific concerns and design the loss accordingly, and thus are not as principled as ours. Other methods including ALICE \cite{li2017alice} and VEEGAN \cite{srivastava2017veegan} can be regarded as variants of our fundamental formulation by adding special regularizers like conditional entropy or reconstruction error on the latent space. 
Recently, BigBiGAN \cite{donahue2019large} is proposed to implement the formulation of BiGAN using the BigGAN architectures. Its main contribution is to translate the progress in image generation to representation learning, especially the network architectures with much more capacity and benefits of scaling up training. In contrast, our work only considers small training scale while focus on the formulation and algorithm, and thus is orthogonal to it.

\subsection{Adversarial approach for $f$-divergence minimization}\label{sec:relate_div_min}

There is a number of work involving adversarial approach for $f$-divergence minimization. One principled approach is the $f$-GAN \cite{nowozin2016f} which is based on the variational representation of $f$-divergences. We investigate clearly the differences and advantages of our AGES over $f$-GAN in Section \ref{sec:fgan} and experiments.

Several papers decompose a problem of $f$-divergence minimization into two subproblems: density ratio estimation and divergence minimization \cite{poole2016improved,mohamed2016learning,mescheder2017adversarial,srivastava2017veegan,chen2017symmetric}, but their methods are fundamentally different from ours. They directly estimate the objective by the discriminator and the consequent algorithms are heuristic based on adversarial training. Specifically when evaluating the gradients they simply ignore the dependence in the discriminator on parameters while only take into account the dependence in data. Their derived gradient estimations are generally different from ours given the same objective. Besides, natural ways to extend these methods to bidirectional cases tend to diverge. In Appendix \ref{app:related}, we give more detailed discussion and comparison in both theoretical forms and empirical performance.

CFG-GAN \cite{Johnson2019AFO} presents a new framework for GANs using functional gradient learning where the generator is updated by adding an estimator of the functional composition. In contrast, we follow the traditional GANs with parametrized networks and Theorem \ref{thm:grad} enables us to directly evaluate the gradient wrt. the parameters.

\subsection{Gradient estimation in generative modeling}

There are some literature involving score estimation in the context of generative modeling where the score $\nabla_x \log q(x)$ of a given probability density $q(x)$ is the gradient of interest. One method of score estimation is the Stein gradient estimator \cite{li2017gradient,shi2018spectral} proposed for implicit distributions. This is further applied to estimate the gradient of mutual information which is a special case of KL \cite{wen2020mutual}. \cite{song2019generative} presents a new generative model where samples are produced via Langevin dynamics using gradients of the data distribution estimated with score matching. However this paper considers the gradient of a general $f$-divergence wrt. the encoder/decoder parameter which cannot be formulated regarding the score function and thus cannot be estimated based on the previous work.

\subsection{Unifying VAEs and GANs}

Some literature propose to unify VAEs and GANs. \cite{hu2017unifying} reformulates GANs and VAEs under the framework of Adversarial Domain Adaptation and links them back to the classic wake-sleep algorithm. To achieve this, the authors sometimes regard latent variables as observed ones and generation process as inference, which may not be as straightforward. In contrast, this paper starts with a general formulation of bidirectional generative modeling followed by a proposed optimization approach, which turns out to accommodate both VAE and GAN under our framework. Hence our unified view is more natural and directly related to generative models. AVB \cite{mescheder2017adversarial} proposes a specific approach to combine VAE and GAN, rather than a unified view in a broad sense as ours, as discussed in Section \ref{sec:unify}.


\section{Conclusion}

This paper considers the general $f$-divergence formulation of bidirectional generative modeling and discuss its advantages. 
We propose a new optimization method, AGES, for this formulation, where the gradient is computed using an adversarially learned discriminator. In our framework, we show that different divergences induce similar algorithms in terms of gradient evaluation, except with different scaling. This unifies the treatment of $f$-divergence GAN. Therefore this paper proposes a general recipe for a class of principled $f$-divergence based generative modeling methods. We further propose the scaling clipping technique and obtain an algorithm which locally minimizes several divergences simultaneously with a lower variance and higher training stability.  

Extensive empirical studies are conducted to demonstrate the advantages of our approach over existing methods, including effective divergence optimization, alleviating mode collapse, and promising performance in real data generation and representation. The potential of our method in more applications such as disentanglement/causal representation learning, image translation and downstream classification tasks, and the benefits after scaling up are worth further exploration.

{
\nocite{*} 
\bibliography{ref_nips.bib}
\bibliographystyle{ieeetr}}

\clearpage


\begin{appendices}


\section{Unidirectional $f$-divergence minimization with AGES} \label{app:uni_age}

In this section we apply AGES in unidirectional generation, where our goal is to learn the stochastic transformation $G_\theta(z,\epsilon)$ so that its distribution, denoted by $p_\theta(x)$, becomes close to the real data distribution $p_r(x)$. Similar to the formulation in bidirectional case, we consider the following optimization problem
\begin{equation*}
\min_{\theta}\ L_{\rm uni}(\theta)=D_f(p_r(x),p_\theta(x))
\end{equation*}
which coincides with the problem stated in $f$-GAN. However equipped with AGES, we generally obtain different algorithms given the same $f$-divergence as the objective. The following theorem enables us to evaluate the gradient of $f$-divergence w.r.t. the generator parameter.
\begin{thm}
Let $r(x)=p_r(x)/p_\theta(x)$ and $\mathcal{D}(x)=\log(p_r(x)/p_\theta(x))$. We have
\begin{equation}\label{eq:grad_uni}
\nabla_\theta L_{\rm uni}(\theta)=-\mathbb{E}_{z\sim p_z,\epsilon\sim p_\epsilon}\left[s(G_\theta(z,\epsilon)) \nabla_x\mathcal{D}(G_\theta(z,\epsilon))^\top\nabla_\theta G_\theta(z,\epsilon)\right],
\end{equation}
where $s(x)=\tilde{f}''\left(1/r(x)\right)/r(x)$.
\end{thm}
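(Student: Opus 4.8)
The goal is to differentiate $L_{\rm uni}(\theta)=D_f(p_r,p_\theta)=\mathbb{E}_{p_r(x)}[\tilde f(1/r(x))]$ (using the right-hand form of the $f$-divergence in \eqref{eq:f_div}, which is convenient because $p_r$ does not depend on $\theta$) with respect to $\theta$. Writing $r(x)=p_r(x)/p_\theta(x)$, the only $\theta$-dependence sits inside $p_\theta$, so the chain rule gives
\begin{equation*}
\nabla_\theta L_{\rm uni}(\theta)=\mathbb{E}_{p_r(x)}\!\left[\tilde f'\!\Big(\tfrac{1}{r(x)}\Big)\,\nabla_\theta\!\Big(\tfrac{p_\theta(x)}{p_r(x)}\Big)\right]
=\mathbb{E}_{p_r(x)}\!\left[\tilde f'\!\Big(\tfrac{1}{r(x)}\Big)\,\frac{\nabla_\theta p_\theta(x)}{p_r(x)}\right].
\end{equation*}
The first step, then, is to obtain a usable expression for $\nabla_\theta p_\theta(x)$, the density of $G_\theta(Z,\epsilon)$ with $(Z,\epsilon)\sim p_z\otimes p_\epsilon$. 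This is exactly the same object that appears in the proof of Theorem~\ref{thm:grad} for the marginal $p_g(x)$, so I would invoke that computation: I expect an identity of the form $\nabla_\theta p_\theta(x) = -\,\nabla_x\!\cdot\!\big(\,\cdots\big)$ relating the parameter-gradient of the pushforward density to a divergence (in the $x$-variable) of a vector field built from $\nabla_\theta G_\theta$ — the standard ``continuity equation'' / transport identity for reparametrized samples.

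**Main steps.** (1) Reduce to the marginal-gradient lemma: express $\nabla_\theta L_{\rm uni}$ as an $x$-integral against $\tilde f'(1/r(x))\,\nabla_\theta p_\theta(x)/p_r(x)$. (2) Substitute the transport identity for $\nabla_\theta p_\theta(x)$ and integrate by parts in $x$, moving the spatial derivative onto the factor $\tilde f'(1/r(x))/p_r(x)$ times whatever test-function weight appears; the boundary terms vanish under the implicit tail/regularity assumptions (densities absolutely continuous, smooth decay). (3) Carry the $x$-gradient through: $\nabla_x\big[\tilde f'(1/r(x))/p_r(x)\big]$ produces a $\tilde f''(1/r(x))$ term times $\nabla_x(1/r(x))$ plus lower-order pieces; the lower-order pieces should cancel against the analogous terms coming from differentiating $1/p_r(x)$ — this cancellation is the crux that collapses everything to a single clean scaling factor. (4) Rewrite $\nabla_x(1/r(x))$ and the surviving pieces in terms of $\nabla_x\mathcal{D}(x)=\nabla_x\log(p_r/p_\theta)$, using $1/r = p_\theta/p_r$ so that $\nabla_x(1/r) = -(1/r)\nabla_x\mathcal{D}$. (5) Re-expectation: fold the $x$-integral back into an expectation over $(Z,\epsilon)$ via the change of variables $x=G_\theta(z,\epsilon)$, which reintroduces $\nabla_\theta G_\theta(z,\epsilon)$ as the Jacobian weight; collect the constant and sign to land on $s(x)=\tilde f''(1/r(x))/r(x)$ and the stated formula~\eqref{eq:grad_uni}.

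**Expected obstacle.** The delicate part is Step (3)–(4): tracking all the terms produced by the integration by parts and verifying that the ``extra'' terms involving $\nabla_x\log p_r(x)$ and $\nabla_x\log p_\theta(x)$ separately recombine into the single factor $\tilde f''(1/r)/r$ times $\nabla_x\mathcal{D}$ with nothing left over. This is precisely the $z$-free specialization of the analogous cancellation in Theorem~\ref{thm:grad}, so strategically I would not redo it from scratch: I would set $\mathcal{X}$ in place of $\mathcal{X}\times\mathcal{Z}$, drop the $z$-variable and the encoder entirely, note that the generator-side argument in the proof of Theorem~\ref{thm:grad} used only the marginal structure of $p_g(x)=\mathbb{E}_{z}[p_g(x|z)]$ (here $p_\theta(x)=\mathbb{E}_{z,\epsilon}[\,\cdot\,]$ plays the identical role), and observe that $s_\theta(x,z)=\tilde f''(1/r(x,z))/r(x,z)$ becomes $s(x)=\tilde f''(1/r(x))/r(x)$ verbatim. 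The only thing to check carefully is that discarding the $\nabla_\phi$ component and the conditional structure does not disturb the $\nabla_\theta$ derivation — which it does not, since the two halves of \eqref{eq:grad} are derived independently. A secondary, purely technical point is justifying the interchange of $\nabla_\theta$ with the expectation and the integration by parts; I would state the needed dominated-convergence / decay hypotheses rather than belabor them, consistent with the paper's standing smoothness assumptions on $f$, $G_\theta$, and the densities.
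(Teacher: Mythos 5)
Your proposal is correct and follows essentially the same route as the paper, whose own proof is simply ``similar to the proof of Theorem~\ref{thm:grad}'': reduce to $\int \tilde f'(1/r(x))\,\nabla_\theta p_\theta(x)\,dx$, substitute the transport identity of Lemma~\ref{lemma} for the marginal $p_\theta(x)$, integrate by parts, and reparametrize. The obstacle you flag in steps (3)--(4) is in fact vacuous: once the $p_r$ density cancels the $1/p_r$ factor, the only spatial derivative produced is $\nabla_x\tilde f'(1/r)=\tilde f''(1/r)\nabla_x(1/r)=-\tilde f''(1/r)(1/r)\nabla_x\mathcal D$, with no leftover $\nabla_x\log p_r$ or $\nabla_x\log p_\theta$ terms to cancel.
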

\begin{proof}
Similar to the proof of Theorem \ref{thm:grad}.
\end{proof}

Let $D(x)$ be the solution to the empirical Logistic regression that distinguishes the generated data from the real data:
\begin{equation*}
D(x)=\argmin_{D'}\left[\frac{1}{|S_r|}\sum_{x\in S_r}\log(1+e^{-D'(x)}) + \frac{1}{|S_g|}\sum_{x\in S_g}\log(1+e^{D'(x)})\right]
\end{equation*}
where $S_r$ and $S_g$ are finite samples from $p_r(x)$ and $p_g(x)$ respectively. Similarly we know $D(x)\approx\mathcal D(x)$.
Replacing $\mathcal D(x)$ and $r(x)$ in the gradients (\ref{eq:grad_uni}) with $D(x)$ and $\hat{r}(x)=e^{D(x)}$, we obtain the maximum likelihood estimator for the gradients. Similar to bidirectional AGES-ALL, we obtain an algorithm that approximately minimizes several $f$-divergences between $p_r(x)$ and $p_g(x)$ simultaneously by setting $s(x)=1$.


\section{Detailed discussion on related work}
\label{app:related}
A number of papers propose to use discriminator-based approaches for minimizing KL or more general divergences. In this section we give a detailed discussion on the fundamental difference of those methods from ours. 

As mentioned in many papers \cite{poole2016improved,mohamed2016learning,uehara2016generative}, a problem of $f$-divergence minimization can be decomposed into two subproblems: density ratio estimation and divergence minimization. Our proposed method is coherent in this sense. 

For density ratio estimation, existing methods use Logistic regression \cite{mescheder2017adversarial,srivastava2017veegan,chen2017symmetric} or $f$-GAN $D$ losses \cite{poole2016improved}. As mentioned in main text, Logistic regression, which is also used in our method, is motivated by its asymptotic statistical efficiency, while $f$-GAN $D$ losses are not as efficient and have different forms for various divergences, which is not as unified and easy to implement as Logistic.

More crucially, for divergence minimization, all previous methods are fundamentally different from ours. In a word, previous methods estimate the objective and the consequent algorithms are heuristic based on the idea of adversarial training. In contrast, the derived gradient formula in Theorem \ref{thm:grad} enables us to directly estimate the gradient. Hence our algorithm which is based on gradient descent has guarantee for convergence following the convergence results of SGD and consistency of density ratio estimation.

To be specific, as pointed out by \cite{poole2016improved}, $f$-divergences are a family of divergences that depend only on samples from one distribution and the density ratio. Based on this, they first estimate the objective function by plug in the density ratio estimator. Recall the objective in (\ref{eq:obj_fdiv})
\begin{equation}\label{eq:objapp}
L(\theta,\phi)=\bE_{p_\theta(x,z)}[f(r(x,z))]=\bE_{p_\phi(x,z)}[\tilde{f}(1/r(x,z))],
\end{equation}
where $p_\theta(x,z)=p_g(x,z)$, $p_\phi(x,z)=p_e(x,z)$ and $r(x,z)=p_\phi(x,z)/p_\theta(x,z)$. They obtain the estimated objective $\hat{L}(\theta,\phi)=\bE_{p_\theta(x,z)}[f(e^{D(x,z)})]$, where $D(x,z)$ gives an estimate for $\log(p_\phi(x,z)/p_\theta(x,z))$, e.g., the solution to the empirical Logistic regression (\ref{eq:obj_logistic}).


Their consequent algorithms are based on alternating optimization. When evaluating the gradient of the estimated objective wrt. the generator(/encoder) parameters, they only take the samples part into account while ignore the dependence of the density ratio estimator itself on the parameters. Specifically, they update $\theta$ using gradient 
\begin{equation*}
\frac{1}{n}\sum_{i=1}^n \nabla_\theta f(e^{D(G_\theta(z_i,\epsilon_i),z_i)})=\frac{1}{n}\sum_{i=1}^n f'(e^{D(G_\theta(z_i,\epsilon_i),z_i)})e^{D(G_\theta(z_i,\epsilon_i),z_i)}\nabla_x D(G_\theta(z_i,\epsilon_i),z_i)^\top \nabla_\theta G_\theta(z_i,\epsilon_i)
\end{equation*}
which is generally not the true gradient derived in Theorem \ref{thm:grad}, as shown more concretely in Table \ref{tab:diff_scale}. Besides, this method cannot be directly applied in bidirectional case since the samples only depend on one of encoder and decoder while we need to learn both. A possible way one may think of is to use the equivalent expression of $f$-divergence as in (\ref{eq:objapp}) and then update $\phi$ similarly using gradient
\begin{equation*}
\frac{1}{n}\sum_{i=1}^n \nabla_\phi \tilde{f}(e^{-D(x_i,E_\phi(x_i,\epsilon_i))})=-\frac{1}{n}\sum_{i=1}^n \tilde{f}'(e^{-D(x_i,E_\phi(x_i,\epsilon_i))})e^{-D(x_i,E_\phi(x_i,\epsilon_i))}\nabla_z D(x_i,E_\phi(x_i,\epsilon_i))^\top \nabla_\phi E_\phi(x_i,\epsilon_i)
\end{equation*}
which again is generally not the true gradient derived in Theorem \ref{thm:grad}.

Now we uniformly denote the gradients used in SGD by
\begin{align*}
&\frac{1}{n}\sum_{i=1}^n \tilde{s}_\theta(G_\theta(z_i,\epsilon_i),z_i) \nabla_x D(G_\theta(z_i,\epsilon_i),z_i)^\top \nabla_\theta G_\theta(z_i,\epsilon_i)\\
&\frac{1}{n}\sum_{i=1}^n \tilde{s}_\phi(x_i,E_\phi(x_i,\epsilon_i)) \nabla_z D(x_i,E_\phi(x_i,\epsilon_i))^\top \nabla_\phi E_\phi(x_i,\epsilon_i)
\end{align*}
where $\tilde{s}_\theta(x,z)$ and $\tilde{s}_\phi(x,z)$ are scalings depending on the divergence and specific methods. Note that $\tilde{s}_\theta=-s_\theta$ and $\tilde{s}_\phi=s_\phi$ in Theorem \ref{thm:grad}. Table \ref{tab:diff_scale} lists the scalings of KL, RevKL and JS divergence derived from different methods, which shows that the gradients derived from previous methods generally differs from ours and even with contrary signs. This indicates that previous heuristic algorithms will not work in bidirectional $f$-divergence minimization.

We then verify the failure of their algorithm in bidirectional KL divergence minimization through the experiment in Section \ref{sec:exp_opt}. Figure \ref{fig:obj_value} shows the training curve of VAE (ideal one), AGES-KL and their algorithm.

\begin{table}[h]
\centering\small
\caption{Scalings in gradients of KL, RevKL and JS divergence derived from previous methods and our AGES}
\vskip 0.1in
\begin{tabular}{c|cc|cc|cc}
\toprule
Divergence & \multicolumn{2}{c|}{KL} & \multicolumn{2}{c|}{RevKL} & \multicolumn{2}{c}{JS} \\\midrule
Method & Others & AGES & Others & AGES & Others & AGES \\\midrule
$\tilde{s}_\theta$ & 1 & 1 & $(D-1)e^{-D}$ & $-e^{-D}$ & $-e^D\log\frac{2}{1+\exp(-D)}$ & $\frac{1}{1+\exp(-D)}$ \\
$\tilde{s}_\phi$ & $(D+1)e^D$ & $-e^D$ & $-1$ & $-1$ & $e^{-D}\log\frac{2}{1+\exp(D)}$ & $\frac{1}{1+\exp(D)}$ \\
\bottomrule
\end{tabular}
\label{tab:diff_scale}
\end{table}

\begin{figure}[h]
\centering
\includegraphics[width=0.65\textwidth]{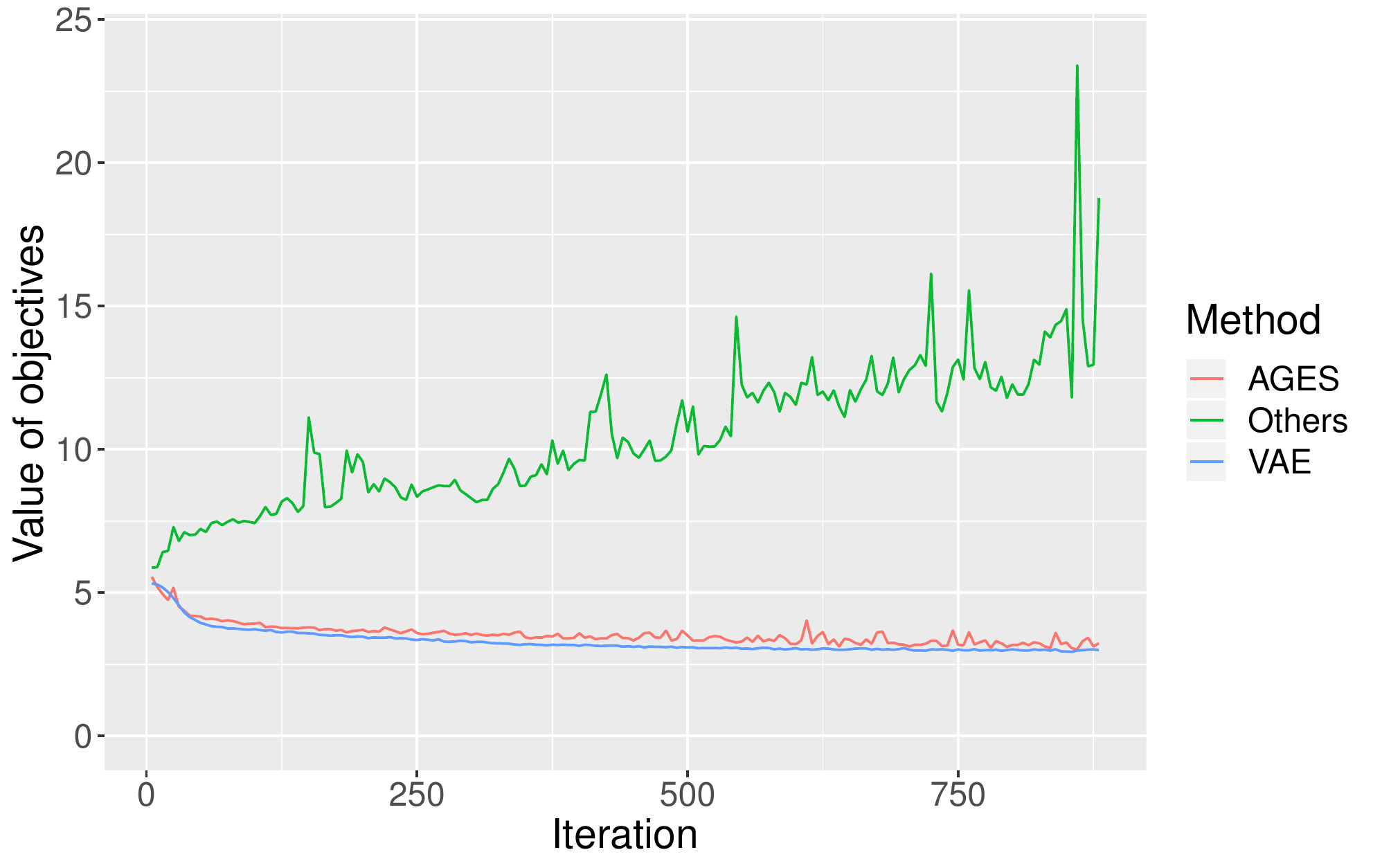}
\caption{Value of objective $L_{\text{VAE}}$ during training process of three methods.}
\label{fig:obj_value}
\end{figure}



\section{Choice for encoder and generator distributions} \label{app:choice_dist}

In this section we discuss the conventional choices for encoder and generator distributions in VAEs and GANs along with our suggestions.

\medskip
\noindent
\textbf{Gaussian generators may not be suitable.} \ 
While AVB extends the Gaussian encoder in VAE to an implicit distribution, we argue that the more crucial model element is the choice for the generator distribution. A Gaussian generator used in VAE and AVB is not suitable to model more complex real data like images. One intuitive explanation is that the complex space (e.g., pixels) is usually of very high dimension and may have some properties which can not be suited well in the Euclidean space with a Gaussian distribution. In practice when generating new data from VAE/AVB, after sampling $z\sim p_z$, people always use the mean of the Gaussian generator rather than random samples. For images, the means tend to be blurry while the random samples are noisy and far away from the true data distribution. The reason we suggest is due to the improper Gaussian assumption rather than KL as the objective. 

\medskip
\noindent
\textbf{Stochastic encoders and generators have benefits.} \ 
On the other hand, BiGAN uses deterministic encoder and generator transformations which are essentially degenerated cases in the sense that $p_e(z|x)$ and $p_g(x|z)$ can only capture one-point distributions. ALI and BigBiGAN use a deterministic generator and a stochastic encoder which causes some asymmetry. In contrast, we suggest that stochasticity in transformations increases model expressiveness and helps with mode covering.

\medskip
\noindent
\textbf{We suggest an implicit generator distribution.} \ 
Usually in unidirectional GANs, people use ``deterministic" generators but a relatively high dimensional latent vector. For example the progressive GAN \cite{karras2017progressive} sets the latent dimensionality to 512. We can think of the latent vector as a composition of latent factors to represent high-level features and random noises to capture stochastic variation. Then with the nonlinear transformation over the random noises, the conditional distribution of generated data given the high-level features is implicit. However in bidirectional models, the desired latent representation should only include high-level features. Thus we separate random noises $\epsilon$ as the source of randomness and follow the idea of expressive implicit distributions.
With an implicit generator distribution, both VAE and AVB do not apply and hence we are motivated to employ the proposed AGES for optimization.

\medskip
\noindent
\textbf{Details of the implicit generator.} \ 
To construct a generator with an implicit distribution, we adopt the similar idea as the StyleGAN generator \cite{karras2019style}. We generate some single-channel feature maps consisting of uncorrelated Gaussian noises, one for each layer of the generator network except the final output image, with the same resolution as the output feature map of that layer. Each noise feature map is broadcasted to all channels using learned per-pixel scaling factors and then added to the output of the corresponding convolution. In this manner, each convolution layer in the generator produces a conditional Gaussian distribution given all the previous layers. After the consequent nonlinear transformations, the final output image conditional on the input latent variable is an implicit distribution.

\section{Proofs}
\subsection{Proof of equivalent expressions of the joint KL}\label{app:kl_decom}

\begin{proof}[Proof of (\ref{eq:kl_decom_uni})]
By the definition of KL divergence, we have
\begin{align*}
\dkl(p_e(x,z),p_g(x,z))&=\mathbb{E}_{p_e(x,z)}\left[\log\frac{p_e(x,z)}{p_g(x,z)}\right]=\mathbb{E}_{p_e(x,z)}\left[\log\frac{p_r(x)p_e(z|x)}{p_g(x)p_g(z|x)}\right]\\
&=\mathbb{E}_{p_r(x)}\left[\log\frac{p_r(x)}{p_g(x)}\right] + \mathbb{E}_{p_r(x)}\left[\mathbb{E}_{p_e(z|x)}\left(\log\frac{p_e(z|x)}{p_g(z|x)}\right)\right]\\
&=\dkl(p_r(x),p_g(x))+\mathbb{E}_{x\sim p_r(x)}[\dkl(p_e(z|x),p_g(z|x))].
\end{align*}
\end{proof}

\begin{proof}[Proof of (\ref{eq:kl_decom_cc})]
\begin{align*}
\dkl(p_e(x,z),p_g(x,z))&=\mathbb{E}_{p_e(x,z)}\left[\log\frac{p_e(z|x)}{p_g(x|z)p_z(z)}\right]+\mathbb{E}_{p_e(x,z)}[\log p_r(x)]\\
&=-\mathbb{E}_{p_e(x,z)}[\log p_g(x|z)]+\mathbb{E}_{p_e(x,z)}\left[\log\frac{p_e(z|x)}{p_z(z)}\right]+\mathbb{E}_{p_r(x)}[\log p_r(x)]\\
&=-\mathbb{E}_{p_r(x)}\mathbb{E}_{p_e(z|x)}[\log p_g(x|z)]+\mathbb{E}_{p_r(x)}[\dkl(p_e(z|x),p_z(z))]+\mathbb{E}_{p_r(x)}[\log p_r(x)].
\end{align*}
\end{proof}

\subsection{Proof of Theorem \ref{thm:grad}}\label{app:thm_proof}
The proof technique is inspired by that of CFG-GAN \cite{Johnson2019AFO}.
Let $\|\cdot\|$ denote the vector 2-norm. Given a differentiable vector function $g(x):\mathbb{R}^k\to\mathbb{R}^k$, we use $\nabla\cdot g(x)$ to denote its divergence, defined as 
$$\nabla\cdot g(x):=\sum_{j=1}^k\frac{\partial[g(x)]_j}{\partial[x]_j},$$ 
where $[x]_j$ denotes the $j$-th component of $x$. We know that 
\begin{eqnarray*}
\int\nabla\cdot g(x)dx=0
\end{eqnarray*}
for all vector function $g(x)$ such that $g(\infty) = 0$. Given a matrix function $w(x)=(w_1(x),\dots,w_l(x)):\mathbb{R}^k\to\mathbb{R}^{k\times l}$ where each $w_i(x),i=1\dots,l$ is a $k$-dimensional differentiable vector function, its divergence is defined as $\nabla\cdot w(x)=(\nabla\cdot w_1(x),\dots,\nabla\cdot w_l(x))$.

To prove Theorem \ref{thm:grad}, we need the following lemma which specifies the dynamics of the generator joint distribution $p_g(x,z)$ and the encoder joint distribution $p_e(x,z)$, denoted by $p_\theta(x,z)$ and $p_\phi(x,z)$ here.

\begin{lemma}\label{lemma}
Using the definitions and notations in Theorem \ref{thm:grad}, we have 
\begin{align}
\nabla_\theta p_\theta(x,z) &= -\nabla_x p_\theta(x,z)^\top g_\theta(x) - p_\theta(x,z)\nabla\cdot g_\theta(x),\label{eq:grad_pgen}\\
\nabla_\phi p_\phi(x,z) &= -\nabla_z p_\phi(x,z)^\top e_\phi(z) - p_\phi(x,z)\nabla\cdot e_\phi(z),\label{eq:grad_penc}
\end{align}
for all data $x$ and latent variable $z$, where $g_\theta(G_\theta(z,\epsilon))=\nabla_\theta G_\theta(z,\epsilon)$ and $e_\phi(E_\phi(x,\epsilon))=\nabla_\phi E_\phi(x,\epsilon)$.
\end{lemma}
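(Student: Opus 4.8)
\textbf{Proof proposal for Lemma \ref{lemma}.}

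The plan is to track how the density of a transformed random variable changes when the transformation is perturbed infinitesimally by a parameter. Consider first the generator side. Since $p_\theta(x,z) = p_z(z) p_g(x|z)$ and $z$ is drawn independently of $\theta$, it suffices to understand the $\theta$-dependence of the conditional $p_g(x|z)$, which is the law of $G_\theta(z,\epsilon)$ with $\epsilon \sim p_\epsilon$ for a fixed $z$. The key idea is a continuity-equation / change-of-variables argument: if we think of $\theta$ as ``time'' and $G_\theta(z,\epsilon)$ as the position of a particle initialized at $G_{\theta_0}(z,\epsilon)$, then the velocity field of this flow at a point $x$ is precisely $g_\theta(x)$ defined by $g_\theta(G_\theta(z,\epsilon)) = \nabla_\theta G_\theta(z,\epsilon)$ — this is a matrix field because $\theta$ is a vector, so each column $[g_\theta]_j$ is the velocity with respect to $[\theta]_j$. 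Standard transport-theorem reasoning (the Liouville / Fokker--Planck equation with zero diffusion) then yields, componentwise in $\theta$,
\begin{equation*}
\nabla_\theta p_\theta(x,z) = -\nabla\cdot\bigl(p_\theta(x,z)\, g_\theta(x)\bigr) = -\nabla_x p_\theta(x,z)^\top g_\theta(x) - p_\theta(x,z)\,\nabla\cdot g_\theta(x),
\end{equation*}
where the divergence $\nabla\cdot$ acts on the $x$-variable and the last equality is just the product rule. The argument for $\nabla_\phi p_\phi(x,z)$ is entirely symmetric: write $p_\phi(x,z) = p_r(x) p_e(z|x)$, fix $x$, regard $\phi$ as time and $E_\phi(x,\epsilon)$ as a particle in $\mathcal{Z}$ with velocity field $e_\phi$, and apply the same continuity equation in the $z$-variable.

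To make the transport argument rigorous rather than heuristic, I would proceed as follows. Fix $z$ and a test function $\varphi$ on $\mathcal{X}$, and compute $\nabla_\theta \mathbb{E}_{\epsilon\sim p_\epsilon}[\varphi(G_\theta(z,\epsilon))]$ in two ways. On one hand, differentiating under the integral sign gives $\mathbb{E}_\epsilon[\nabla_x\varphi(G_\theta(z,\epsilon))^\top \nabla_\theta G_\theta(z,\epsilon)] = \int \nabla_x\varphi(x)^\top g_\theta(x)\, p_g(x|z)\,dx$, using the definition of $g_\theta$ and the fact that $p_g(x|z)$ is the pushforward of $p_\epsilon$ under $G_\theta(z,\cdot)$. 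On the other hand, the same quantity equals $\int \varphi(x)\, \nabla_\theta p_g(x|z)\, dx$. Now integrate the first expression by parts in $x$: $\int \nabla_x\varphi(x)^\top g_\theta(x) p_g(x|z) dx = -\int \varphi(x)\, \nabla\cdot\bigl(p_g(x|z) g_\theta(x)\bigr) dx$, where the boundary term vanishes because densities decay at infinity (the hypothesis $g(\infty)=0$ / integrability noted in the preamble). Since $\varphi$ is arbitrary, we may equate integrands: $\nabla_\theta p_g(x|z) = -\nabla\cdot(p_g(x|z) g_\theta(x))$. Multiplying through by $p_z(z)$ gives \eqref{eq:grad_pgen}, and the symmetric computation with $\varphi$ on $\mathcal{Z}$, $x$ fixed, gives \eqref{eq:grad_penc}.

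The main obstacle I anticipate is not the formal manipulation but justifying the two analytic steps: (i) differentiating the expectation under the integral sign, which requires dominated-convergence-type control on $\nabla_\theta G_\theta$ uniformly in a neighborhood of $\theta$, and (ii) the vanishing of the boundary term in the integration by parts, which needs $p_g(x|z) g_\theta(x) \to 0$ as $\|x\|\to\infty$ fast enough (and similarly for the encoder). Both are mild regularity conditions consistent with the paper's standing assumptions (neural-network transformations, absolutely continuous distributions with respect to Lebesgue measure), so I would state them as implicit smoothness/decay hypotheses and note that the identities then follow from the test-function argument above; the bulk of the written proof is the clean two-way computation of $\nabla_\theta\mathbb{E}_\epsilon[\varphi(G_\theta(z,\epsilon))]$ and its $\phi$-analogue.
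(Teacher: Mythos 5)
Your proof is correct, but it reaches the continuity-equation identity by a different route than the paper. The paper's proof is a direct pointwise computation: it perturbs $\theta$ along a coordinate direction $\Delta=\delta e_i$, writes the perturbed pair $Y'=(X',Z)$ as $Y$ plus a first-order displacement $(g(X),\mathbf{0})\Delta$, and applies the multivariate change-of-variables formula for densities, expanding the Jacobian determinant $|\det(I_d+\nabla g(x)\Delta)|^{-1}$ and the density itself to first order in $\delta$; collecting the $O(\delta)$ terms gives \eqref{eq:grad_pgen} directly. You instead work in the weak formulation: you pair the conditional law of $G_\theta(z,\epsilon)$ against a test function $\varphi$, differentiate the expectation under the integral sign, integrate by parts in $x$, and invoke the arbitrariness of $\varphi$ to upgrade the weak identity to a pointwise one. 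The two arguments require essentially the same hypotheses — smoothness of $G_\theta$ in $\theta$, the implicit assumption that $\nabla_\theta G_\theta(z,\epsilon)$ is a function of the output $G_\theta(z,\epsilon)$ alone (so that $g_\theta(x)$ is well defined), and decay at infinity (the paper uses it through $\int\nabla\cdot g\,dx=0$ for $g(\infty)=0$; you use it to kill the boundary term in the integration by parts). Your version buys a cleaner bookkeeping, since it avoids the chain of $o(\delta)$ approximations and the determinant expansion, at the cost of needing the test-function localization step and dominated-convergence justification for differentiating under the integral; the paper's version yields the pointwise statement immediately but is more delicate in tracking which quantities are evaluated at $y$ versus $y'$. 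Both are standard derivations of the transport (Liouville) equation and are equally valid here; one small point worth making explicit in your write-up is that equating integrands against arbitrary $\varphi$ gives the identity only Lebesgue-almost everywhere, which suffices for the subsequent integration in Theorem \ref{thm:grad}.
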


\begin{proof}[Proof of Lemma \ref{lemma}]
Let $l$ be the dimension of parameter $\theta$.
To simplify notation, let random vector $X=G_\theta(Z,\epsilon)\in\mathbb{R}^d$ and $Y=(X,Z)\in\mathbb{R}^{d+k}$, and let $p$ be the probability density of $Y$. For each $i=1,\dots,l$, let $\Delta=\delta e_i$ where $e_i$ is a $l$-dimensional unit vector whose $i$-th component is one and all the others are zero, and $\delta$ is a small scalar. Let $X'=G_{\theta+\Delta}(Z,\epsilon)$ and $Y'=(X',Z)$ so that $Y'$ is a random variable transformed from $Y$ by $$Y'=Y+ \begin{pmatrix} g(X) \\ \mathbf{0} \end{pmatrix} \Delta + o(\delta)$$ where $g(X)\in\mathbb{R}^{d\times l}$ and let $p'$ be the probability density of $Y'$. For an arbitrary $y'=(x',z)\in\mathbb{R}^{d+k}$, let $x=x'+g(x)\Delta+o(\delta)$ and $y=(x,z)$. Then we have
\begin{align}
p'(y')&=p(y)|\det(dy'/dy)|^{-1}\nonumber\\
&=p(y)|\det(I_d+\nabla g(x)\Delta+o(\delta))|^{-1}\nonumber\\
&=p(y)(1+\Delta^\top\nabla\cdot g(x)+o(\delta))^{-1}\nonumber\\ 
&=p(y)(1-\Delta^\top\nabla\cdot g(x)+o(\delta))\label{eq:l1}\\
&=p(y)-\Delta^\top p(y')\nabla\cdot g(x') +o(\delta)\label{eq:l2}\\
&=p(y')-\Delta^\top g(x')^\top\cdot\nabla_{x'} p(x',z) - \Delta^\top p(y')\nabla\cdot g(y') +o(\delta).\label{eq:l3}
\end{align}
The first three equalities use the multivariate change of variables formula for probability densities for the change from $Y$ to $Y'$ and the definition of determinant with terms explicitly expanded up to $O(\delta)$. (\ref{eq:l1}) uses the Taylor expansion of $(1+\xi)^{-1}=1-\xi+o(\xi)$ with $\xi=\Delta^\top\nabla\cdot g(y)$. (\ref{eq:l2}) follows from the fact that $p(y')=p(y)+o(1)$ and $\nabla\cdot g(x')=\nabla\cdot g(x)+o(1)$. (\ref{eq:l3}) is due to $p(y)=p(y')-(y'-y)^\top\cdot\nabla p(y')+o(\delta)$. 
Since $y'$ is arbitrary, above implies that
\begin{eqnarray*}
p'(x,z)=p(x,z)-\Delta^\top g(x)^\top\cdot\nabla_x p(x,z)-\Delta^\top p(x,z)\nabla\cdot g(x) + o(\|\delta\|)
\end{eqnarray*}
for all $x\in\mathbb{R}^d,z\in\mathbb{R}^k$ and $i=1,\dots,l$, leading to (\ref{eq:grad_pgen}) by taking $\delta\to0$, setting $g(x)=g_\theta(x)$, and noting that $p=p_\theta$ as both are the density of $(G_\theta(Z,\epsilon),z)$ and $p'=p_{\theta+\Delta}$ as both are the density of $(G_{\theta+\Delta}(Z,\epsilon),z)$. Similarly we can obtain (\ref{eq:grad_penc}).
\end{proof}

\begin{proof}[Proof of Theorem \ref{thm:grad}]
Rewrite the objective (\ref{eq:obj_fdiv}) as $L(\theta,\phi)=\int\ell(p_e,p_g)dxdz$ where $\ell(p_e,p_g)$ denotes the integrands in definition (\ref{eq:f_div}).
Let $\ell'_2(p_e,p_\theta)=\partial\ell(p_e,p_\theta)/\partial p_\theta$.
Using the chain rule and Lemma \ref{lemma}, we have 
\begin{align}
\nabla_\theta\ell(p_e(x,z),p_\theta(x,z))&=\ell'_2(p_e(x,z),p_\theta(x,z))\nabla_\theta p_\theta(x,z)\nonumber\\
&= \ell'_2(p_e(x,z),p_\theta(x,z))\left[-\nabla_x p_\theta(x,z)^\top g_\theta(x) - p_\theta(x,z)\nabla\cdot g_\theta(x)\right]\nonumber\\
&= p_\theta(x,z)\nabla_x \ell'_2(p_e(x,z),p_\theta(x,z))^\top g_\theta(x) - \nabla_x\cdot\left[\ell'_2(p_e(x,z),p_\theta(x,z))p_\theta(x,z)g_\theta(x)\right],\label{eq:grad_int}
\end{align}
where the third equality is obtained by applying the product rule as follows
\begin{align*}
\nabla_x\cdot\left[\ell'_2(p_e(x,z),p_\theta(x,z))p_\theta(x,z)g_\theta(x)\right]&=\ell'_2(p_e(x,z),p_\theta(x,z))p_\theta(x,z)\nabla\cdot g_\theta(x) \\
&\ + \ell'_2(p_e(x,z),p_\theta(x,z))\nabla_x p_\theta(x,z)^\top g_\theta(x) \\
&\ + p_\theta(x,z)\nabla_x \ell'_2(p_e(x,z),p_\theta(x,z))^\top g_\theta(x).
\end{align*}
By integrating (\ref{eq:grad_int}) over $x$ and $z$, and by using the fact that $\int\nabla\cdot f(x,z)dxdz=\mathbf{0}$ with \\$f(x,z)=\ell'_2(p_e(x,z),p_\theta(x,z))p_\theta(x,z)g_\theta(x)$, we have
\begin{eqnarray*}
\nabla_\theta L(\theta,\phi) = \int\nabla_\theta\ell(p_e(x,z),p_\theta(x,z))dxdz=\int p_\theta(x,z)\nabla_x \ell'_2(p_e(x,z),p_\theta(x,z))^\top g_\theta(x)dxdz.
\end{eqnarray*}
According to the definition (\ref{eq:f_div}) of $f$-divergences, we have  
\begin{eqnarray}\label{eq:lll}
\nabla_x \ell'_2(p_e(x,z),p_\theta(x,z))=\tilde{f}''\left(\frac{1}{r(x,z)}\right)\nabla_x \frac{1}{r(x,z)}=\tilde{f}''\left(\frac{1}{r(x,z)}\right)\frac{1}{r(x,z)}\nabla_x \mathcal D(x,z).
\end{eqnarray}
Further by reparametrization and noting that $r(x,z)=e^{D(x,z)}$, we obtain
\begin{align*}
\nabla_\theta L(\theta,\phi) &= -\mathbb{E}_{(x,z)\sim p_g(x,z)}\left[\tilde{f}''\left(\frac{1}{r(x,z)}\right)\frac{1}{r(x,z)}\nabla_x \mathcal D(x,z)^\top g_\theta(x)\right]\\
&=-\mathbb{E}_{z\sim p_z(z),\epsilon\sim p_\epsilon}\left[\tilde{f}''\left(\frac{1}{r(G_\theta(z,\epsilon),z)}\right)\frac{1}{r(G_\theta(z,\epsilon),z)}\nabla_x \mathcal D(G_\theta(z,\epsilon),z)^\top \nabla_\theta G_\theta(z,\epsilon)\right].\\
\end{align*}
Similarly we obtain 
\begin{align*}
\nabla_\phi L(\theta,\phi)&=\mathbb{E}_{(x,z)\sim p_e(x,z)}\left[f''(r(x,z))r(x,z)\nabla_z \mathcal D(x,z)^\top e_\phi(z)\right]\\
&= \mathbb{E}_{x\sim p_r(x),\epsilon\sim p_\epsilon}\left[f''(r(x,E_\phi(x,\epsilon)))r(x,E_\phi(x,\epsilon))\nabla_z \mathcal D(x,E_\phi(x,\epsilon))^\top \nabla_\phi E_\phi(x,\epsilon)\right].
\end{align*}
\end{proof}

\section{Additional experiments on scaling clipping}\label{app:sc}

In this section, we explore how the AGES algorithms behave with varying clipping ranges.
Following Section \ref{sec:sc}, we clip the scaling factors into a range of $[r_0,1/r_0]$ with a specified lower bound $r_0\in[0,1]$. The case of $r_0=0$ means no clipping while $r_0=1$ is the extreme case where all AGES algorithms for different divergences converge to AGES-ALL. 

Figure \ref{fig:sc} exhibits how various metrics vary with the lower bound of scaling clipping $r_0$ increasing from $0$ to $1$ on Stacked MNIST. Experiments show that without SC, AGEs for all divergences tend to suffer from the vanishing or exploding gradient problem and perform poorly. Reverse KL is the most stable one in this case.
We notice that some divergences perform better in certain metrics with certain clipping ranges while some perform better in other cases. For example, AGES-KL-SC has advantages in mode covering over other divergences, which coincides with the results on the MoG dataset. With heavy enough scaling clipping, different divergences do not differ too much on this dataset in reconstruction and generation. 
As we narrow the clipping range (increase $r_0$), the behaviors of different divergences converge to the same one. The extreme AGES-ALL performs stably and sufficiently well in all metrics, for which reason we adopt AGES-ALL on real datasets in the main text. 

We observe that $r_0\approx0.5$ is probably a decent choice where the AGES algorithms for different divergences significantly differ from each other and can preserve the distinctive property of each divergence, while avoid vanishing or exploding gradient. We hence report the detailed results of AGES-SC with a clipping range of $[0.5,2]$ on Stacked MNIST and CelebA in Table \ref{tab:sc}. We see that equipped with the scaling clipping technique, AGES algorithms for various divergences generally perform well and stably on real datasets. 

\begin{figure}[h]
\centering
\subfigure[Number of modes covered in generations (larger is better)]{
\includegraphics[width=0.48\textwidth]{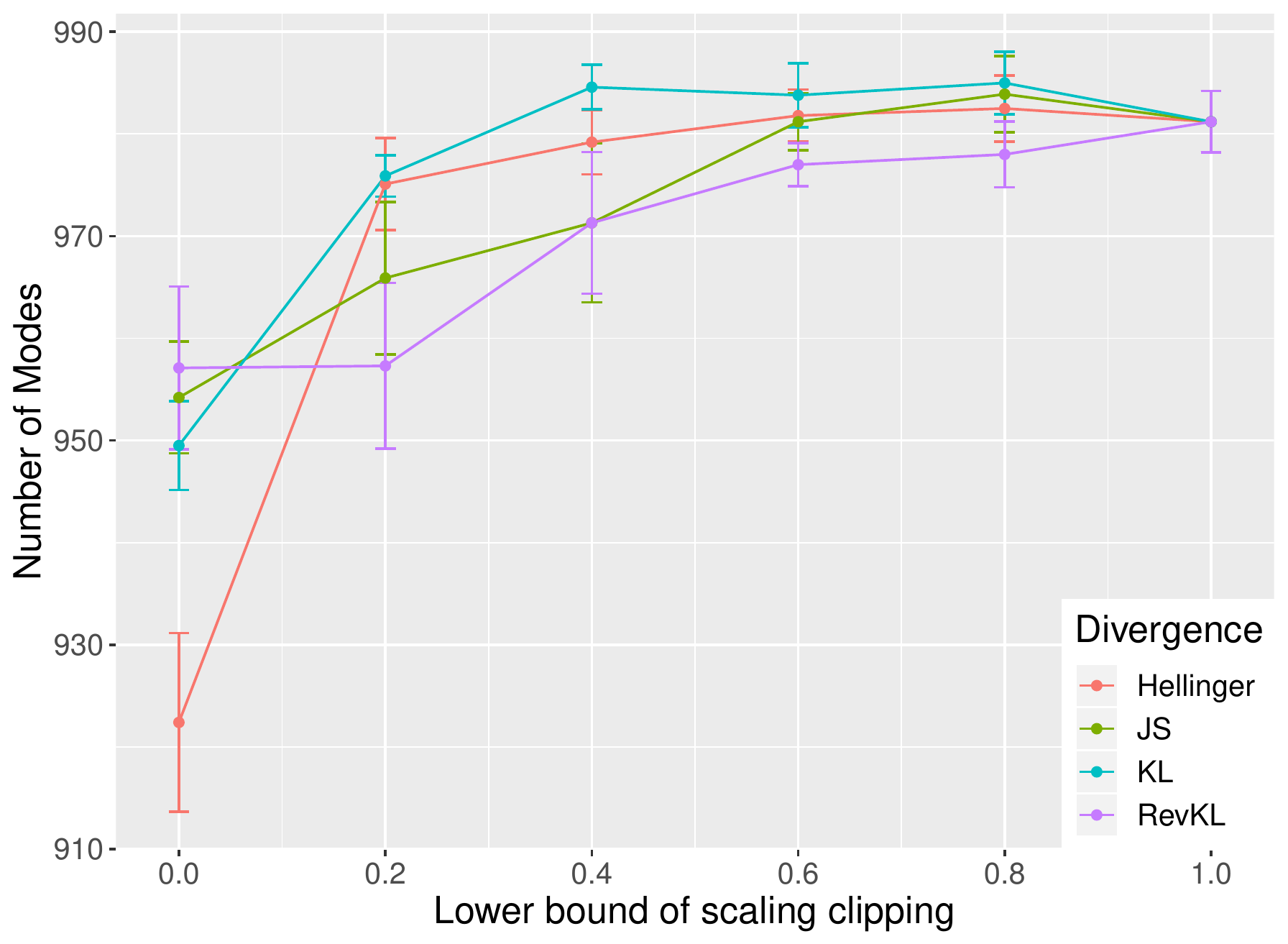}}
\subfigure[RevKL of real/generated mode dist. (smaller is better)]{
\includegraphics[width=0.48\textwidth]{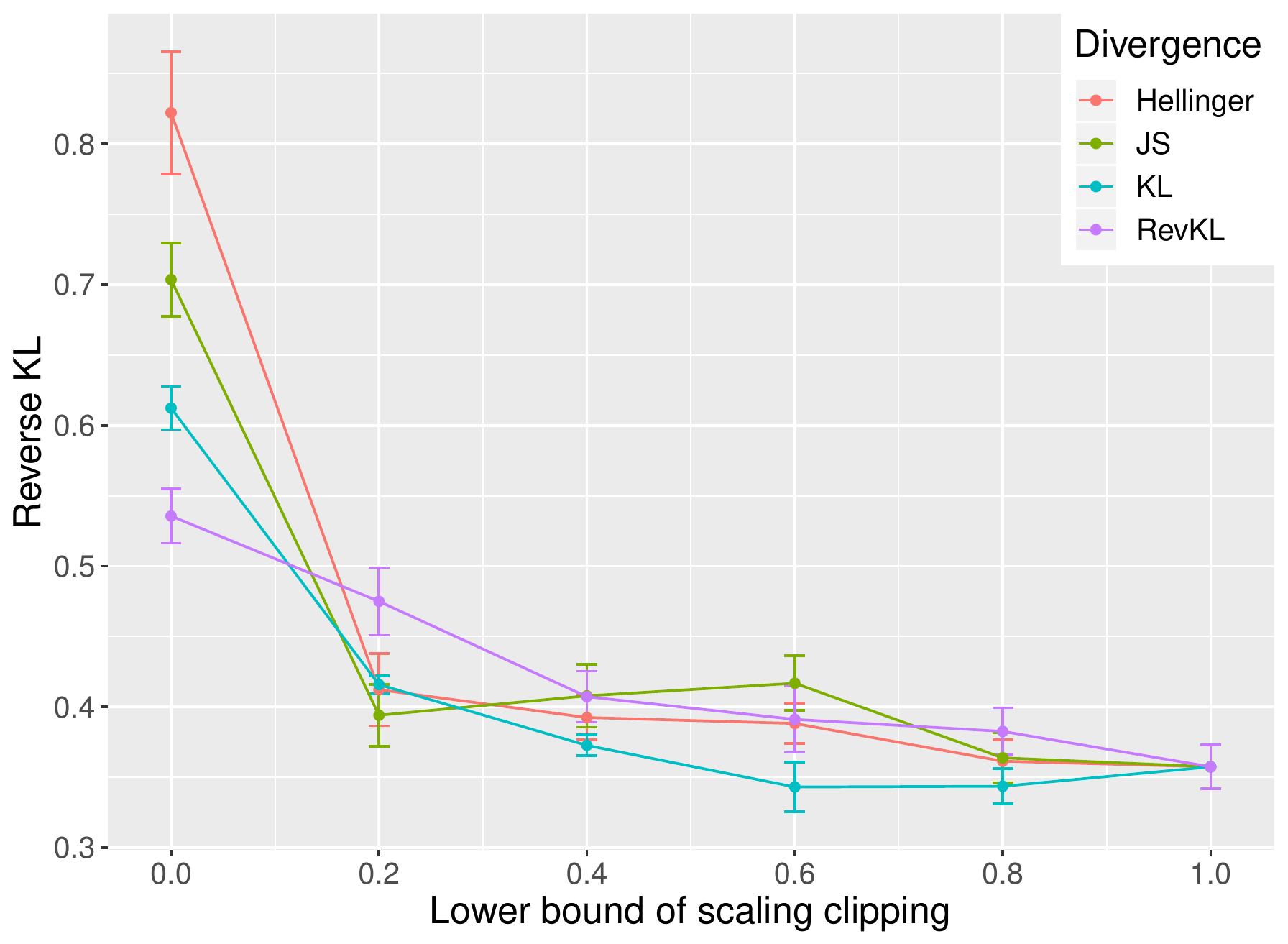}}
\subfigure[Reconstruction accuracy (higher is better)]{
\includegraphics[width=0.48\textwidth]{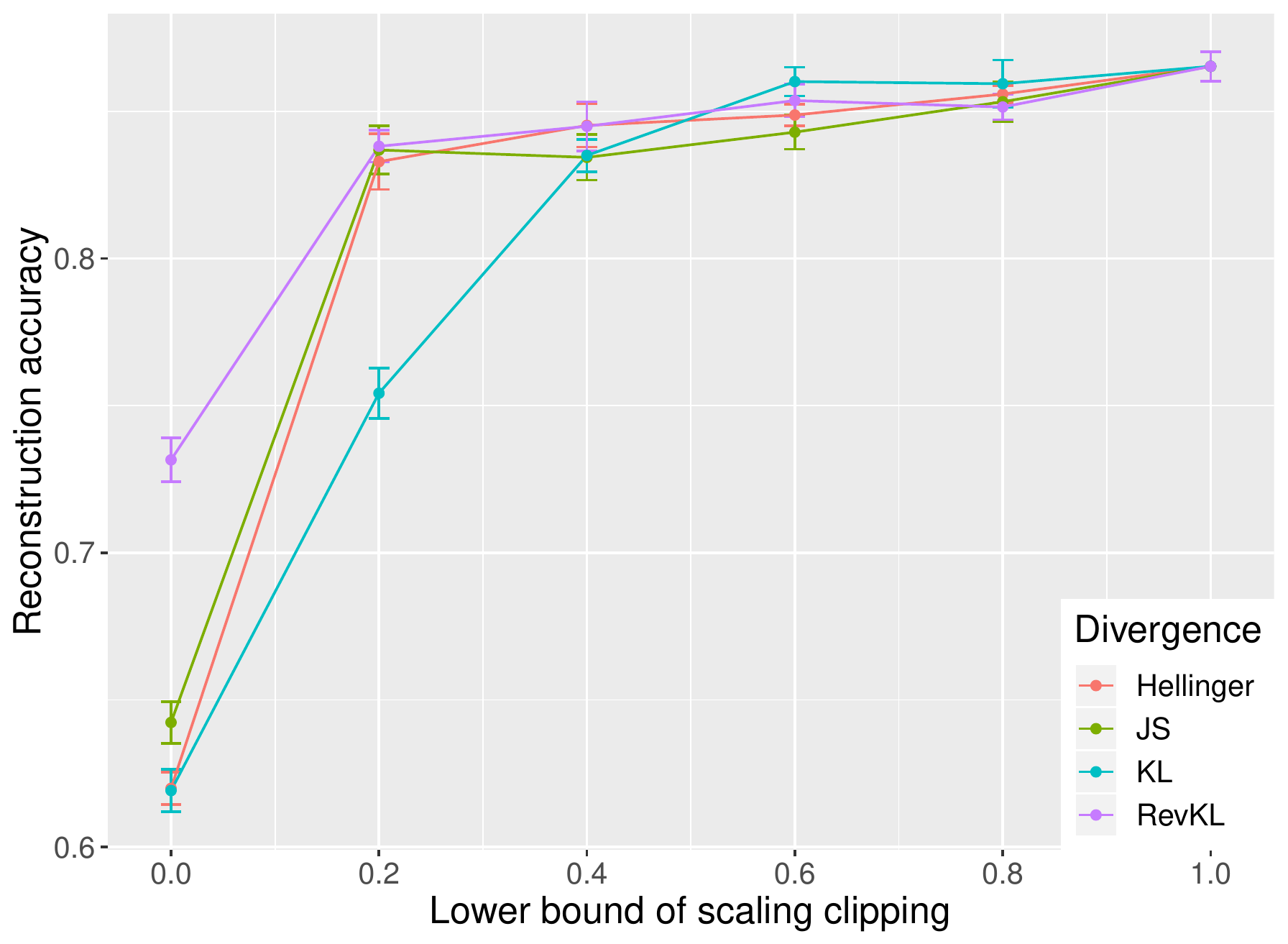}}
\subfigure[FID (smaller is better)]{
\includegraphics[width=0.48\textwidth]{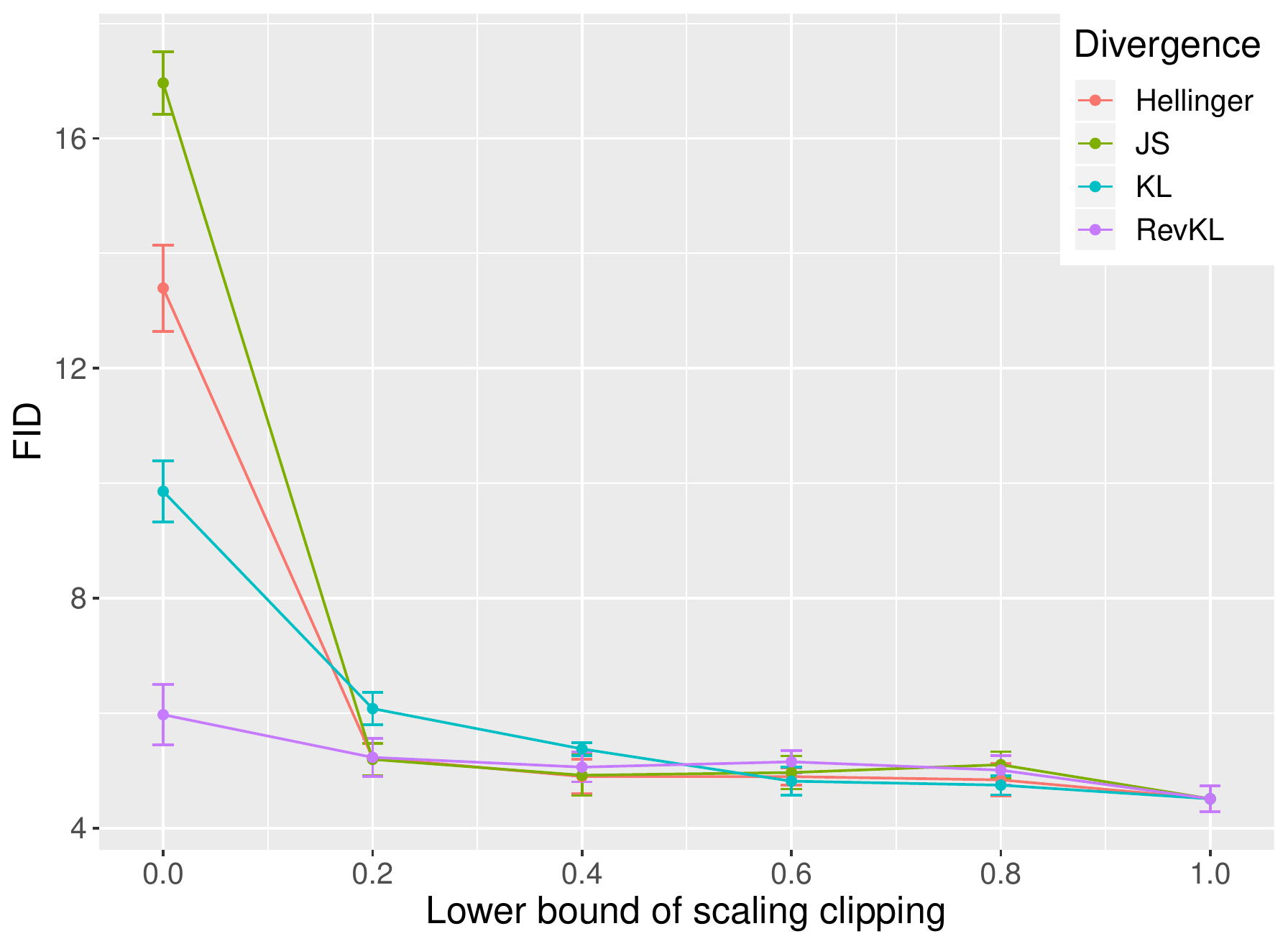}}
\caption{Behavior of AGES algorithms with scaling clipping (AGES-SC) on Stacked MNIST in mode covering, data reconstruction and generation. We repeat each experiment 10 times with the standard error shown by the error bar.}
\label{fig:sc}
\end{figure}

\begin{table}[h]
\centering
\caption{Performance of AGES algorithms with scaling clipping (AGES-SC) on real datasets. }
\subtable[Mode coverage and reconstruction accuracy on Stacked MNIST]{
\begin{tabular}{c|ccc}
\toprule
Divergence & Modes & KL & Recon.(\%)\\\midrule
ALL & 981.2 (9.5) & 0.3574 (0.05) & 86.53 (1.58) \\
KL & 983.5 (8.7) & 0.3503 (0.04) & 84.84 (1.51) \\
Hellinger & 982.5 (10.2) & 0.3497 (0.08) & 85.58 (0.92) \\
JS & 977.7 (17.7) & 0.3600 (0.05) & 83.70 (2.25) \\
RevKL & 976.9 (12.1) & 0.3690 (0.07) & 84.73 (1.26) \\
\bottomrule
\end{tabular}
}
\subtable[FIDs on real datasets]{
\begin{tabular}{ccc}
\toprule
Stacked MNIST & CelebA \\\midrule
4.40 & 8.51 \\
4.68 & 8.83 \\
5.04 & 8.73 \\
5.36 & 9.76 \\
5.06 & 9.93 \\
\bottomrule
\end{tabular}
}
\label{tab:sc}
\end{table}

\section{Experimental details}\label{app:exp}

In this section we state the details of experimental setup and the network architectures used for all experiments. 
In experiments on one dataset, we adopt exactly the same network architecture and experimental settings for different methods. On real datasets, we implement the previous SOTA methods BiGAN \cite{donahue2019large} and BigBiGAN \cite{Donahue2017AdversarialFL,Dumoulin2017AdversariallyLI} rather than directly using the results reported in the original papers for two reasons: (i) for fair comparison we implement them under exactly the same settings as our methods; (ii) the BiGAN paper used the old DCGAN architecture and did not report quantitative metrics for generation but just visually presented some generated samples; (iii) BigBiGAN used large training scale which we cannot afford and only consider one dataset ImageNet.

\subsection{MoG}

For both 9-Gaussians and 25-Gaussians datasets, each majority class contains 10,000 samples and each minority class contains 500 samples. The standard deviation is 0.3 for all classes. The generator and encoder have two and three fully connected layers respectively with 500 units in each layer with batch-normalization and ReLU as the activation function. The discriminator consists of three modules of two fully connected layers with 400 units each and Leaky-ReLU as the activation function to extract features from $x$, $z$ and their concatenated features. We use the Adam optimizer with a learning rate of $1\times10^{-4}$ for $D$ and $5\times10^{-5}$ for $E$ and $G$ and a mini-batch size of 500. The models on 25-Gaussians are trained for 30 epochs before evaluation. We use 30 $D$ steps per $G/E$ step on 9-Gaussians to retain a nearly optimal $D$, and 5 $D$ steps per $G/E$ step on 25-Gaussians to make it a harder task.

\subsection{Stacked MNIST}
We adopt the DCGAN \cite{radford2015unsupervised} architecture for Stacked MNIST.
When following exactly the same experimental setup reported in PacGAN \cite{lin2018pacgan} and VEEGAN \cite{srivastava2017veegan}, we find that all of the algorithms can cover all modes. Hence we reduce the model capacity to make it a harder task. Specifically, details for networks are given below in Table \ref{tab:mnist_g}-\ref{tab:mnist_d}. We use a pre-trained MNIST classifier to classify simulated samples on each of the three stacked channels. We train all models on 128,000 samples, with a mini-batch size of 64, for 50 epochs. We use Adam with a learning rate of 0.0001 and update all three networks once on each mini-batch. Evaluation for mode covering is done on 26,000 test samples. In all experiments, we use 50k generated images for evaluating FIDs. 

\begin{table}[h]
\centering\small
\caption{Generator network for Stacked-MNIST. With batch-normalization. With one Gaussian feature map added to each conv layer.}
\vskip 0.1in
\begin{tabular}{|c|c|c|c|c|}
\hline
Layer & Number of outputs & Kernel size & Stride & Activation function \\\hline
Input $z\sim\mathcal{N}(0,1)^{8}$ & 8 &-&-&-  \\
Fully-connected & $4\times4\times256$ &-&-& ReLU \\
Transposed convolution & $7\times7\times128$ & $5\times5$ & 2 & ReLU \\
Transposed convolution & $14\times14\times64$ & $5\times5$ & 2 & ReLU \\
Transposed convolution & $28\times28\times3$ & $5\times5$ & 2 & Tanh \\\hline
\end{tabular}
\label{tab:mnist_g}
\vskip -0.2in
\end{table}

\begin{table}[h]
\centering\small
\caption{Encoder network for Stacked-MNIST. With batch-normalization. The number of outputs is twice the latent dimension with a Gaussian encoder.}
\vskip 0.1in
\begin{tabular}{|c|c|c|c|c|}
\hline
Layer & Number of outputs & Kernel size & Stride & Activation function \\\hline
Input $x$ & $28\times28\times3$ &-&-&-  \\
Convolution & $14\times14\times64$ & $5\times5$ & 2 & ReLU \\
Convolution & $7\times7\times128$ & $5\times5$ & 2 & ReLU \\
Convolution & $4\times4\times256$ & $5\times5$ & 2 & ReLU \\
Fully-connected & 8 or 16 & - & - & - \\\hline
\end{tabular}
\label{tab:mnist_e}
\vskip -0.2in
\end{table}

\begin{table}[h]
\centering \small
\caption{Discriminator network for Stacked-MNIST. Without batch-normalization.}
\vskip 0.1in
\begin{tabular}{|c|c|c|c|c|}
\hline
Layer & Number of outputs & Kernel size & Stride & Activation function \\\hline
Input $x$ & $28\times28\times3$ &-&-&-  \\
Convolution & $14\times14\times64$ & $5\times5$ & 2 & LeakyReLU \\
Convolution & $7\times7\times128$ & $5\times5$ & 2 & LeakyReLU \\
Convolution & $4\times4\times256$ & $5\times5$ & 2 & LeakyReLU \\
Flatten &-&-&-&- \\
Concatenate $z$ &-&-&-&- \\
Fully-connected & 1024 & - & - & LeakyReLU \\
Fully-connected & 1 & - & - & - \\\hline
\end{tabular}
\label{tab:mnist_d}
\vskip -0.1in
\end{table}

\subsection{CelebA and ImageNet}
We pre-process the images by taking a center crops of $128\times128$ for CelebA and $73\times73$ for ImageNet and resizing to the $64\times64$ resolution.
For such complex datasets, we adopt the SAGAN \cite{zhang2018self,brock2018large} architecture for $D$ and $G$. For the discriminator, we adopt the similar idea in BigBiGAN, where we the $D$ network consists of three modules (Figure \ref{fig:arch_joint_d}) where $D_x$ is the normal SAGAN discriminator with data $x$ as input and feature $f_x$ and score $s_x$ as output, $D_z$ is an MLP with latent $z$ as input and score $s_z$ as output, and $D_{xz}$ is an MLP with concatenated feature $(f_x,f_z)$ as input and score $s_{xz}$ as output. Unlike BigBiGAN which introduces additional unary terms in the $D$ loss, we use a single output of $D$ as the average $(s_x+s_z+s_{xz})/3$ and keep the formulation of $D(x,z)$ -- Logistic regression between joint distributions $p_e(x,z)$ and $p_g(x,z)$. In this sense, involving unary scores here is just an architectural design for $D$ while in BigBiGAN makes it deviate from the original formulation (\ref{eq:obj_bigen}). Details for newtork $G$ and $D_x$ are given in Figure \ref{fig:arch_sagan} and Table \ref{tab:sagan}.
The encoder architecture is the ResNet50 \cite{he2016identity} followed by a 4-layer MLP (size 1024 for CelebA and 2048 for ImageNet) with skip connections after ResNet's global average pooling layer. 

We use Adam with $\beta_1=0$, $\beta_2=0.999$, and a learning rate of $1\times10^{-4}$ for $D$ and $5\times10^{-5}$ for $E$ and $G$. Due to limited computational resource, we use a mini-batch size of 256 for CelebA and 240 for ImageNet. We update all three networks once on each mini-batch. Models were trained for around 50 epochs on CelebA and 200 epochs on ImageNet on NVIDIA RTX 2080 Ti.

\begin{figure}[H]
\centering
\includegraphics[width=0.5\textwidth]{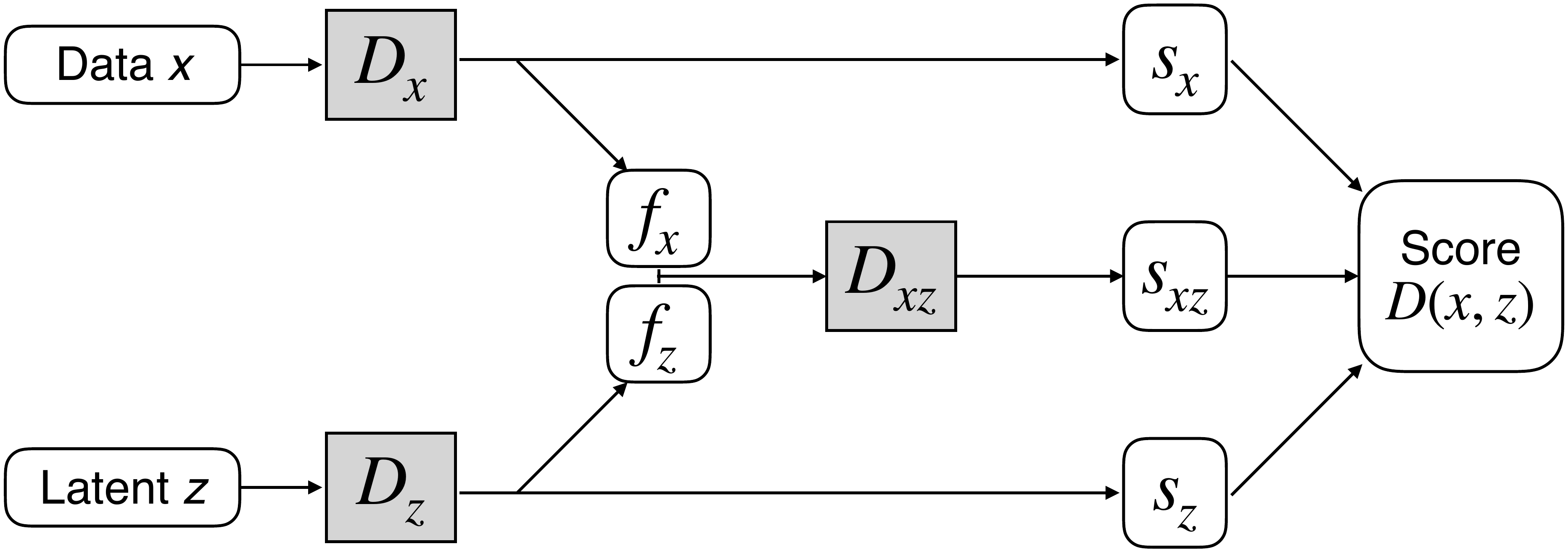}
\caption{Architecture of the discriminator $D(x,z)$}
\label{fig:arch_joint_d}
\end{figure}

\begin{figure}[H]
\centering
\subfigure[]{
\includegraphics[width=0.2\textwidth]{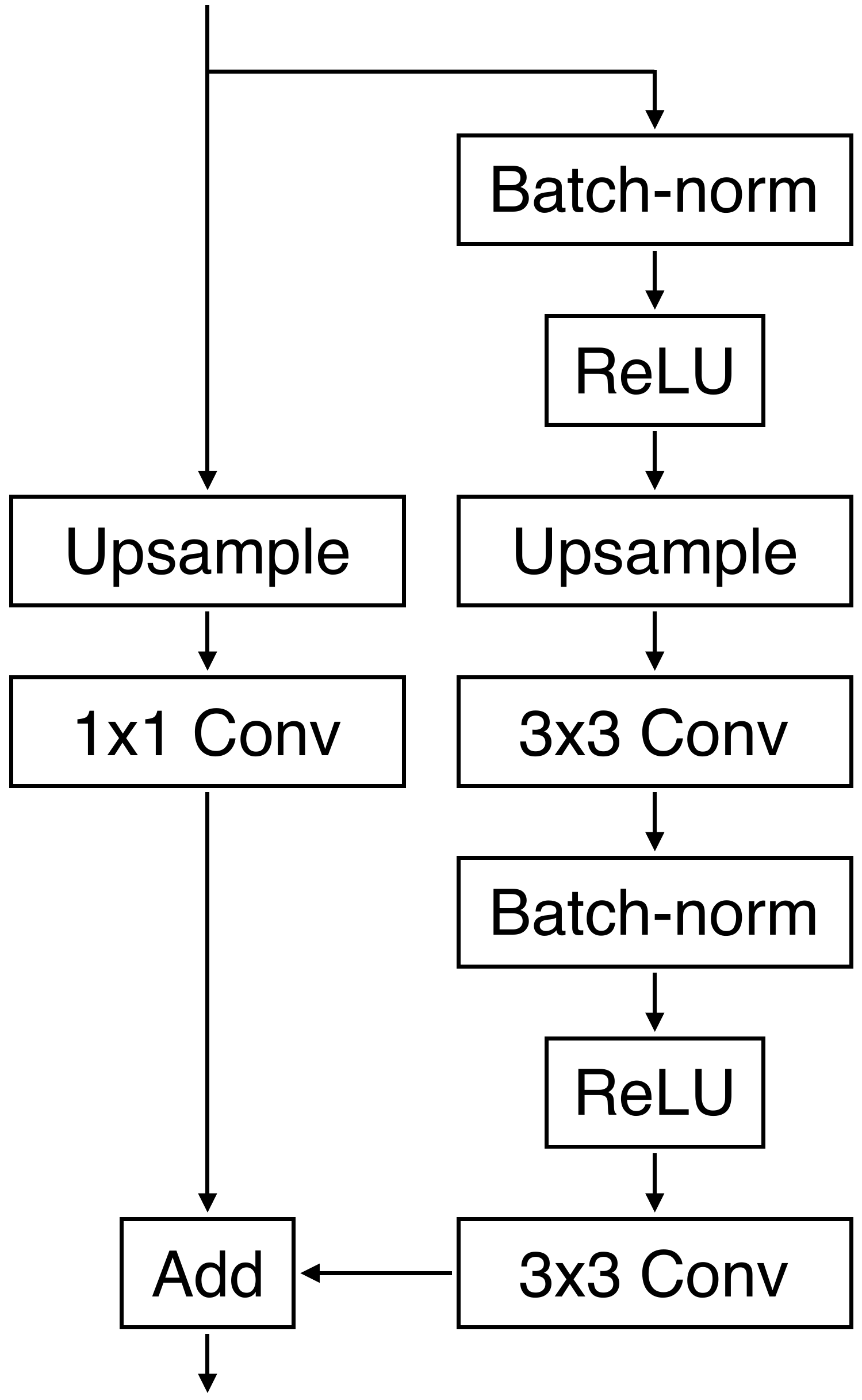}}\hspace{2cm}
\subfigure[]{
\includegraphics[width=0.2\textwidth]{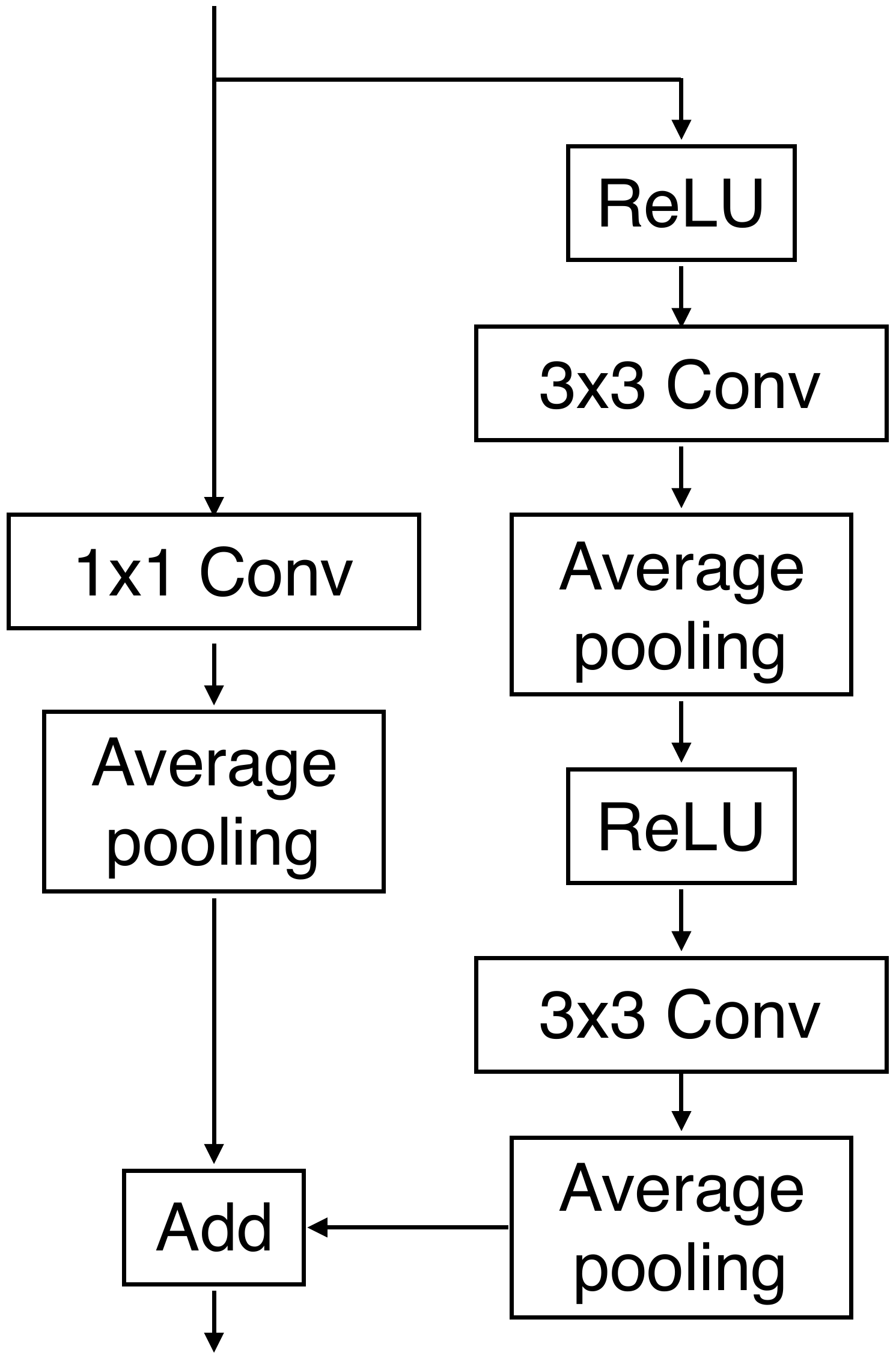}}
\vskip -0.1in
\caption{(a) A residual block (ResBlock up) in the SAGAN generator where we use nearest neighbor interpolation for upsampling; (b) A residual block (ResBlock down) in the SAGAN discriminator.}
\label{fig:arch_sagan}
\end{figure}

\begin{table}[H]
\centering\small
\caption{SAGAN architecture. CelebA uses $k=100$ and $ch=32$; ImageNet uses $k=140$ and $ch=64$.}
\subtable[Generator]{
\begin{tabular}{c}
\toprule
Input: $z\in\mathbb{R}^k\sim\mathcal{N}(0,I)$\\\midrule
Linear $\to4\times4\times16ch$\\\midrule
ResBlock up $16ch\to16ch$\\\midrule
ResBlock up $16ch\to8ch$\\\midrule
ResBlock up $8ch\to4ch$\\\midrule
Non-Local Block $(64\times64)$ \\\midrule
ResBlock up $4ch\to2ch$\\\midrule
BN, ReLU, $3\times3$ Conv $2ch\to3$\\\midrule
Tanh\\\bottomrule
\end{tabular}
}
\hspace{1cm}
\subtable[Discriminator module $D_x$]{
\begin{tabular}{c}
\toprule
Input: RGB image $x\in\mathbb{R}^{64\times64\times3}$\\\midrule
ResBlock down $ch\to2ch$\\\midrule
Non-Local Block $(64\times64)$ \\\midrule
ResBlock down $2ch\to4ch$\\\midrule
ResBlock down $4ch\to8ch$\\\midrule
ResBlock down $8ch\to16ch$\\\midrule
ResBlock $16ch\to16ch$\\\midrule
ReLU, Global average pooling ($f_x$)\\\midrule
Linear $\to1$ ($s_x$)\\\bottomrule
\end{tabular}
}
\label{tab:sagan}
\end{table}

\clearpage
\section{Additional samples and reconstructions}\label{app:samples}

\begin{figure}[h]
\centering
\subfigure{
\includegraphics[width=0.48\textwidth]{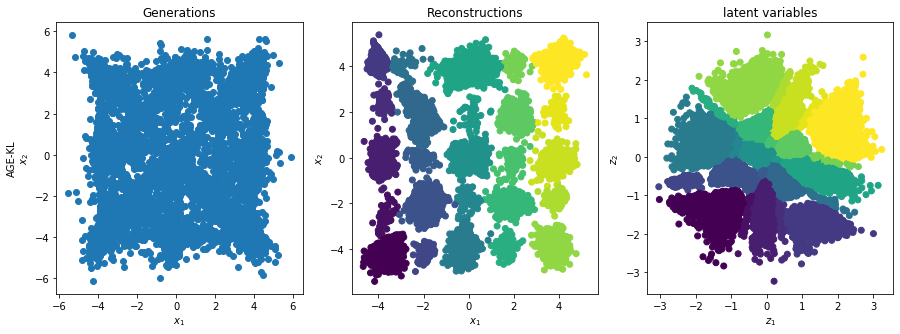}}
\subfigure{
\includegraphics[width=0.48\textwidth]{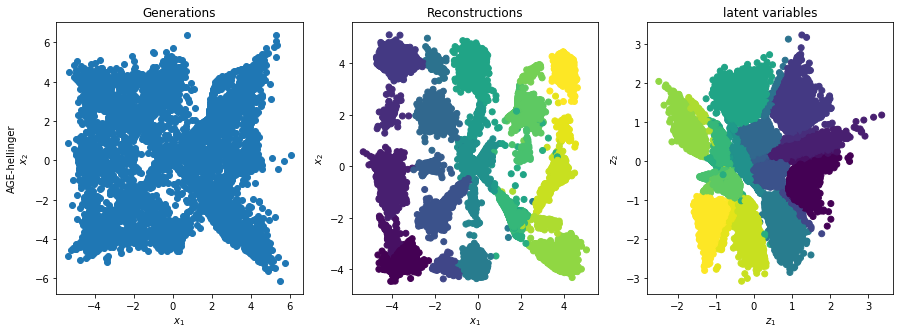}}
\vskip -0.17in
\subfigure{
\includegraphics[width=0.48\textwidth]{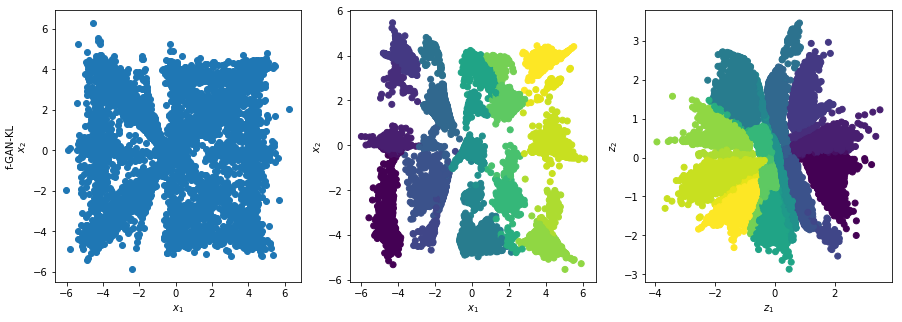}}
\subfigure{
\includegraphics[width=0.48\textwidth]{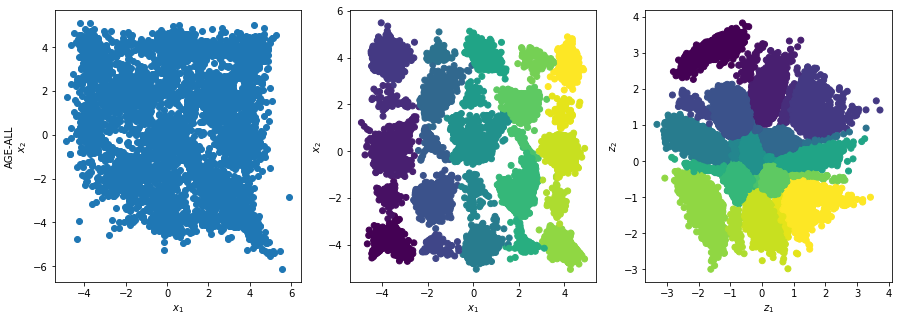}}
\vskip -0.17in
\subfigure{
\includegraphics[width=0.48\textwidth]{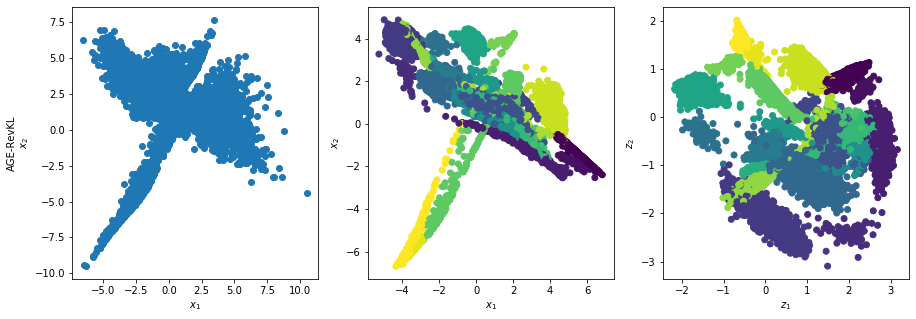}}
\subfigure{
\includegraphics[width=0.48\textwidth]{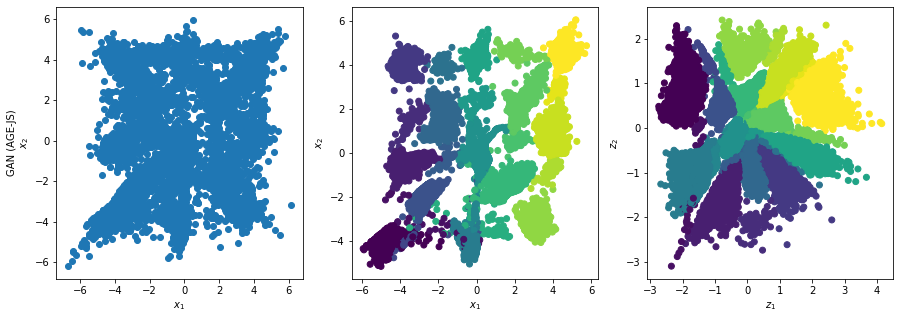}}
\vskip -0.17in
\subfigure{
\includegraphics[width=0.48\textwidth]{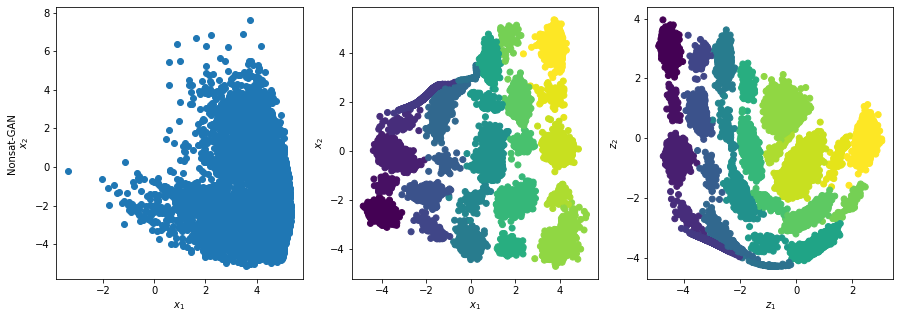}}\vspace{-0.4cm}
\subfigure{
\includegraphics[width=0.48\textwidth]{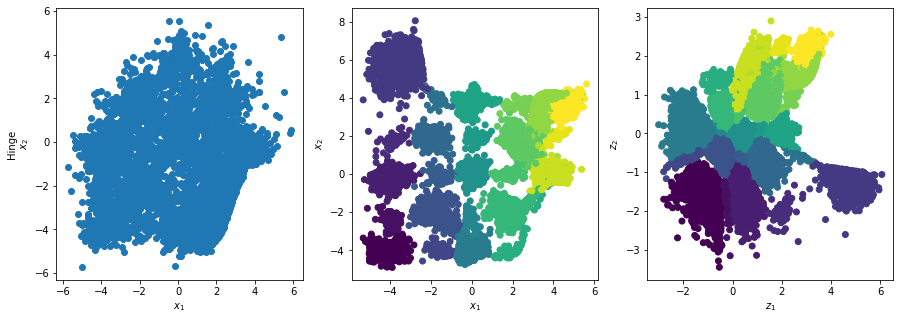}}
\caption{\small Generations, reconstructions and latent space from bidirectional generative models on the MoG dataset using various divergences as the objective. Shown top to bottom, left to right are AGES-KL, AGES-$H^2$, $f$-GAN-KL, AGES-ALL, AGES-RevKL, GAN (AGES-JS), logD-GAN, Hinge. We can clearly observe that mode collapse occurs for all divergences except KL. Moreover, the encoder learned by AGES-KL matches the aggregated posterior $p_e(z)$ and prior $p_z(z)$ the best. Apart from better mode covering, another reason for this is due to the justification of our formulation in unidirectional generative modeling.}
\vskip -.15in
\end{figure}

\begin{figure}[H]
\centering
\subfigure[Real]{
\includegraphics[width=.18\textwidth]{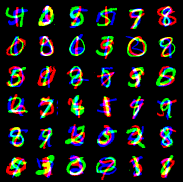}}
\subfigure[$f$-GAN-KL]{
\includegraphics[width=.18\textwidth]{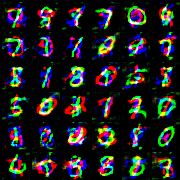}}
\subfigure[GAN]{
\includegraphics[width=.18\textwidth]{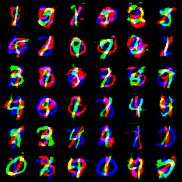}}
\subfigure[logD-GAN]{
\includegraphics[width=.18\textwidth]{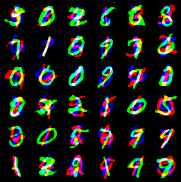}}
\subfigure[Hinge]{
\includegraphics[width=.18\textwidth]{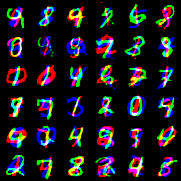}}
\vskip -0.05in
\subfigure[Generations by AGES-ALL]{
\includegraphics[width=.96\textwidth]{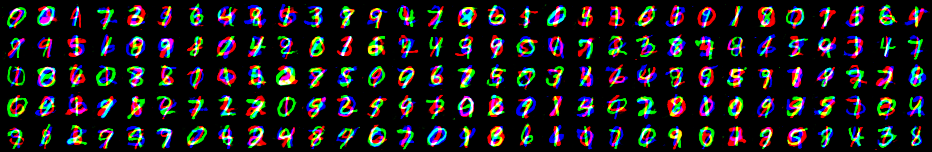}}
\vskip -0.15in
\caption{Generations on Stacked MNIST by the BGM trained using various methods.}
\end{figure}

\begin{figure}[H]
\centering
\subfigure[Real]{
\includegraphics[width=.116\textwidth]{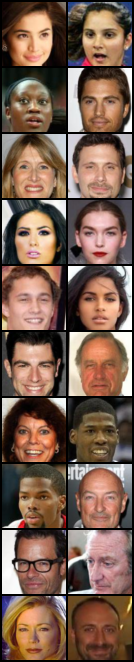}}
\subfigure[Generations by AGES-ALL]{
\includegraphics[width=.86\textwidth]{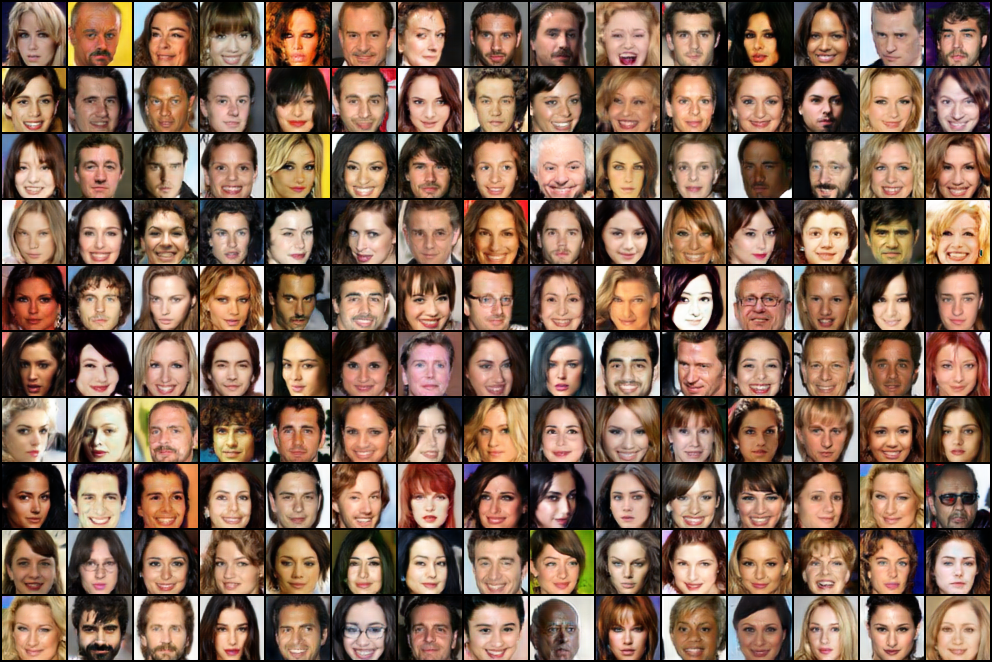}}
\vskip -0.15in
\caption{Generations on CelebA by the BGM trained using AGES.}
\end{figure}

\begin{figure}[H]
\centering
\subfigure[Real]{
\includegraphics[width=.116\textwidth]{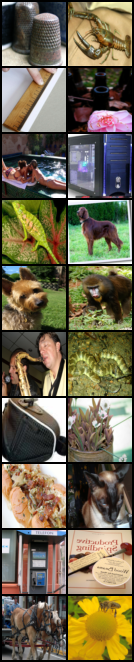}}
\subfigure[Generations by AGES-ALL]{
\includegraphics[width=.86\textwidth]{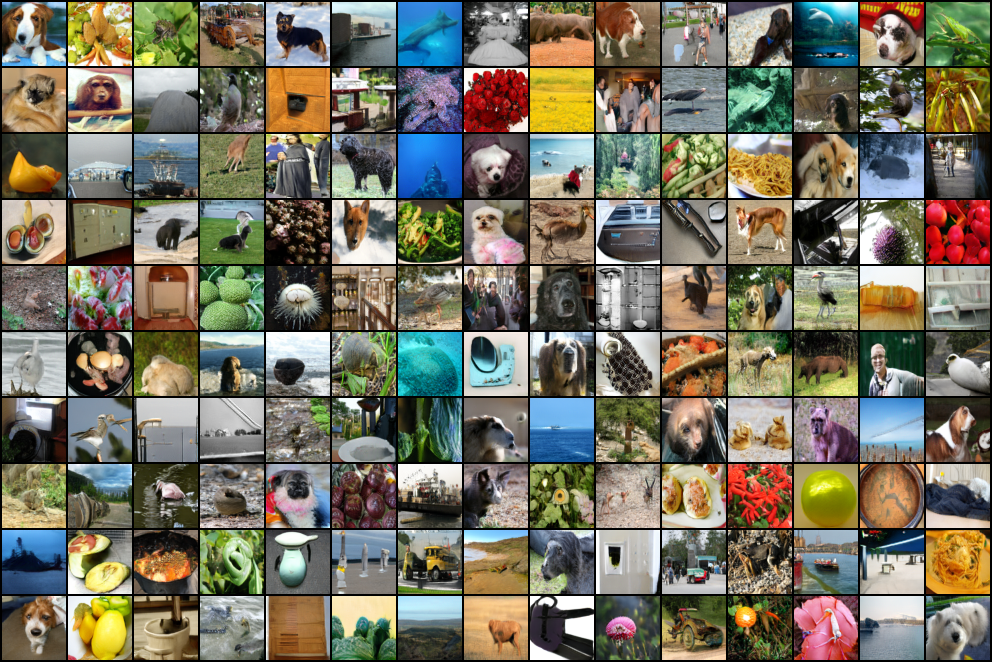}}
\vskip -0.15in
\caption{Generations on ImageNet by the BGM trained using AGES.}
\end{figure}

\begin{figure}
\centering
\includegraphics[width=.99\textwidth]{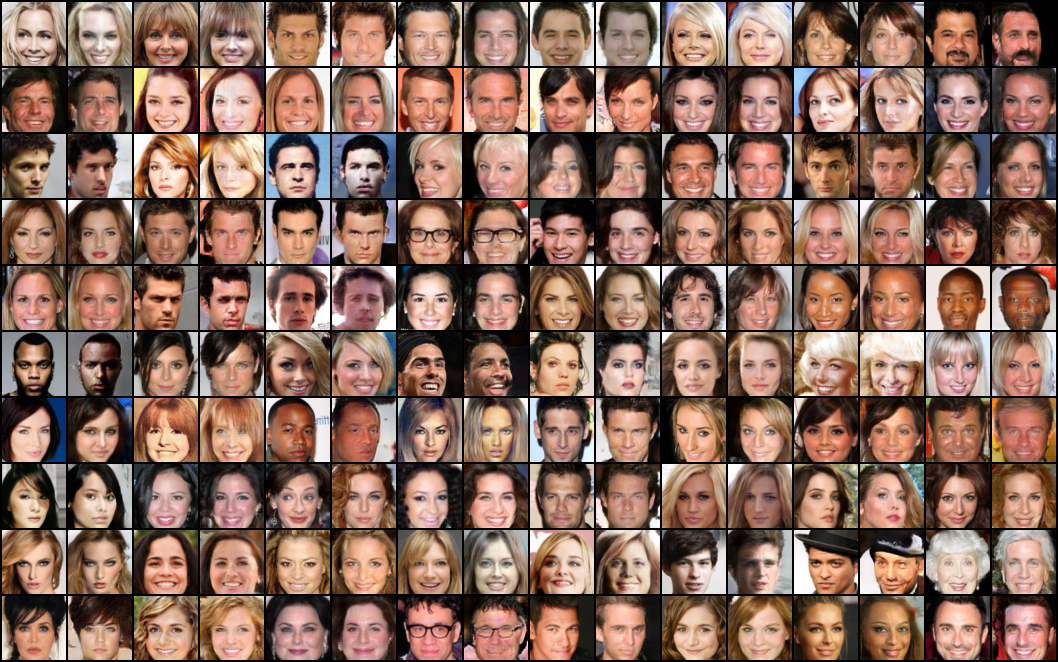}
\caption{Reconstructions on CelebA using AGES-ALL. Odd columns are real images from the validation set and even columns are the corresponding reconstructions.}
\end{figure}

\begin{figure}
\centering
\includegraphics[width=.99\textwidth]{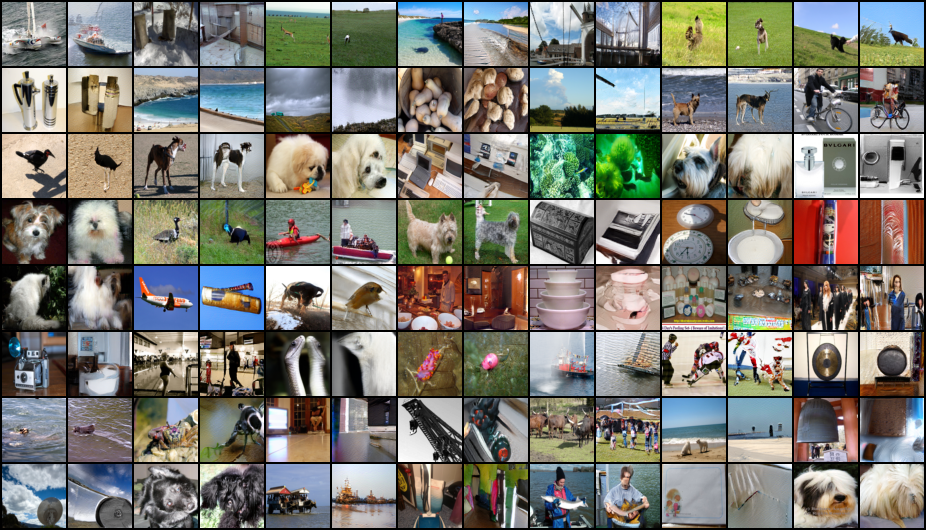}
\caption{Reconstructions on ImageNet using AGES-ALL. Odd columns are real images from the validation set and even columns are the corresponding reconstructions.}
\end{figure}

\begin{figure}
\centering
\subfigure[AGES-ALL]{
\includegraphics[width=.75\textwidth]{fig/celeba_interp.png}}
\subfigure[Hinge]{
\includegraphics[width=.75\textwidth]{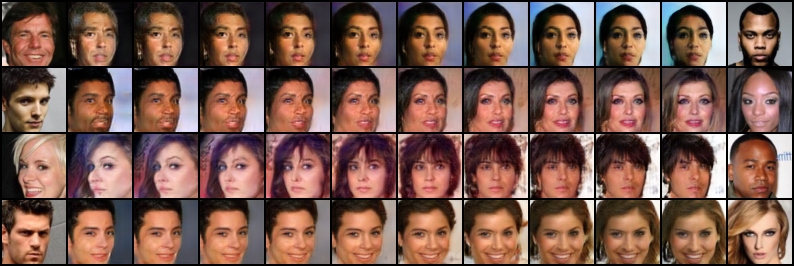}}
\subfigure[logD-GAN]{
\includegraphics[width=.75\textwidth]{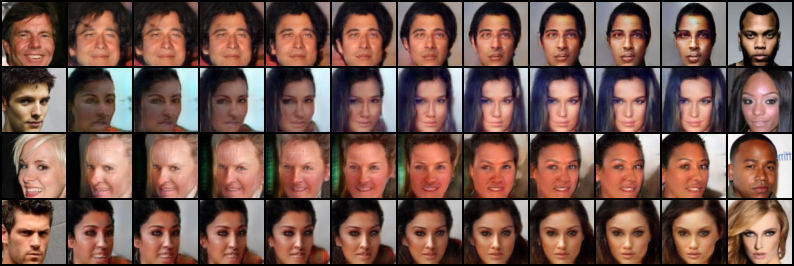}}
\subfigure[GAN]{
\includegraphics[width=.75\textwidth]{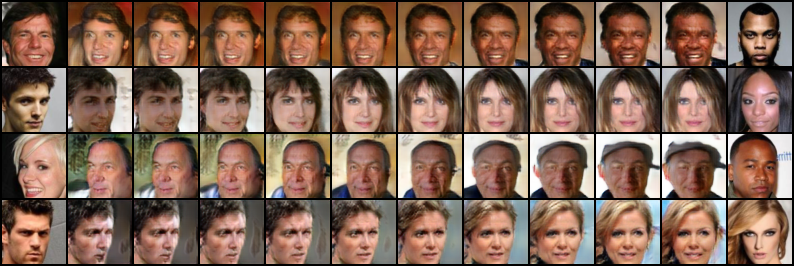}}
\caption{Latent space interpolations on CelebA validation set using various methods. The left and right columns are real images; the columns in between are generated from the latent variables interpolated linearly from the two inferred representations from the real. In contrast to other methods, AGES-ALL is able to generate smoother, more faithful and meaningful intermediate images from the interpolated latent representations between two real images.}
\end{figure}

\end{appendices}

\end{document}